\newcommand{\dataset}{{\cal D}}
\newcommand{\X}{{\mathcal X}}
\newcommand{\Y}{{\mathcal Y}}
\newcommand{\R}{{\mathbb R}}
\newcommand{\be}{\begin{eqnarray}}
\newcommand{\ben}{\begin{eqnarray*}}
\newcommand{\en}{\end{eqnarray}}
\newcommand{\enn}{\end{eqnarray*}}
\newtheorem{theorem}{Theorem}
\newtheorem{lemma}{Lemma}
\newenvironment{proof}{{\em Proof.}}{$\Box$}
\title{A Minimax Probability Machine for\\
Non-Decomposable Performance Measures}
\date{}
\author{Junru~Luo 
\thanks{J. Luo is with the School of Computer Science and Artificial Intelligence $\&$ Aliyun School of Big Data,
Changzhou University, Changzhou, Jiangsu province, China e-mail: (luojunru@cczu.edu.cn). } 
        , Hong~Qiao
\thanks{H. Qiao is the Institute of Automation, Chinese Academy of Sciences, Beijing 100190, China
and School of Artificial Intelligence, University of Chinese Academy of Sciences, Beijing 100049, China
e-mail: (hong.qiao@ia.ac.cn).}
        , and~Bo~Zhang
\thanks{B. Zhang is the Academy of Mathematics and Systems Science, Chinese Academy of Sciences, Beijing 100190, China
and School of Mathematical Sciences, University of Chinese Academy of Sciences, Beijing 100049, China
e-mail: (b.zhang@amt.ac.cn).}}
\begin{document}
\bibliographystyle{plain}
\maketitle

\begin{abstract}
Imbalanced classification tasks are widespread in many real-world applications.
For such classification tasks, in comparison with the accuracy rate, it is usually much more
appropriate to use non-decomposable performance measures such as the Area Under the receiver operating
characteristic Curve (AUC) and the $F_\beta$ measure as the classification criterion since the label class
is imbalanced. On the other hand, the minimax probability machine is a popular method for binary
classification problems and aims at learning a linear classifier by maximizing the accuracy rate,
which makes it unsuitable to deal with imbalanced classification tasks.
The purpose of this paper is to develop a new minimax probability machine for the $F_\beta$ measure,
called MPMF, which can be used to deal with imbalanced classification tasks.
A brief discussion is also given on how to extend the MPMF model for several other non-decomposable
performance measures listed in the paper.
To solve the MPMF model effectively, we derive its equivalent form which can then be solved
by an alternating descent method to learn a linear classifier.
Further, the kernel trick is employed to derive a nonlinear MPMF model to learn a nonlinear classifier.
Several experiments on real-world benchmark datasets demonstrate the effectiveness of our new model.
\end{abstract}




\section{Introduction}

Binary classification may be the most encountered problems in real-world applications and has
been extensively studied.
The standard binary model seeks to learn a classifier by maximizing the Accuracy Rate (AR)
or minimizing the error rate. However, the accuracy rate is not an appropriate metric in the setting
where the label class is imbalanced \cite{Musicant2003,Dembczynsk2011,Menon2013,Parambath2014}.
A simple majority algorithm can guarantee a high prediction accuracy in such a case.
Many algorithms were devised to deal with imbalanced classification problems,
based on the cost-sensitive learning and resampling techniques \cite{Chawla2002,Khan2018,He2009,Kang2018,Kraw2020,Mathew2018}.
A comprehensive survey on related works can be found in \cite{He2009,Johnson2019,Guo2017}.
The classical imbalanced learning algorithms aim at maximizing a modified accuracy rate.
Several other evaluation metrics have been proposed to measure the learned classifier, including AUC,
$F_\beta$-measure, and the geometric mean of the True Positive Rate (TPR) and 
the True Negative Rate (TNR) \cite{Hu2018,Cano2013,Zhu2020}, 
which are more appropriate for imbalanced classification tasks.
But, different from the accuracy rate, these measures can not be expressed as a sum of independent
metrics on individual examples and thus are called non-decomposable performance measures.
They give a more holistic evaluation of the entire data, which makes them challenging to be optimized.

In recent years, many methods have been proposed which focus on optimizing non-decomposable performance
measures \cite{Bascol2019,Sanyal2018,Koyejo2014,Eban2017,Narasimhan2019}.
By \cite{Ye2012}, these methods fall into two groups: the Empirical Utility Maximization (EUM) and
the Decision-Theoretic Approach (DTA).

EUM is also called the Population Utility (PU) in \cite{Dembczynsk2017}.
It learns a classifier by maximizing the corresponding empirical measures based on the training data
and then predicts an unseen instance by using the learned classifier.
Several works have applied the traditional algorithms, such as SVM and logistic regression, to the
non-decomposable performance measure maximization by replacing the error rate with a specially designed
surrogate loss function \cite{Musicant2003,Jansche2005,Joachims2005,Chinta2013}.
Recently, studies are focused on the plug-in approach, which learns a class probability function first
based on a training example and then makes a decision of the threshold according to another data
set \cite{Menon2013,Ye2012,Kotlowski2017}.
\cite{Narasimhan2014} provided a general methodology to show the statistical consistency of the plug-in
approach for any performance measure that can be expressed as a continuous function of the true positive
and true negative rates as well as the class proportion, such as $F_\beta$ and the Geometric Mean (GM) of
TPR and TNR. Cost-sensitive classification is also used in this approach, which learns a classifier by
minimizing a weighted False Positive Rate (FPR) and False Negative Rate (FNR).
\cite{Menon2013} proved the consistency of the empirically cost-sensitive learning algorithm concerning
the Arithmetic Mean (AM) of the TPR and TNR metrics, in which the cost parameter is determined from the
empirical class ratio.
A cost-sensitive Support Vector Machine (SVM) with a tuning threshold was proposed in \cite{Parambath2014} 
for the metric $F_\beta$, where the cost parameter is searched with a grid-based method.
\cite{Bascol2019} presented the Cone-based Optimal Next Evaluation (CONE) algorithm that iteratively
selects the classification costs that lead to a near-optimal $F_\beta$-measure.
As discussed in Section \ref{sec2}, many non-decomposable performance measures except AM aim at minimizing
a polynomial combination of FPR and FNR rather than a linear combination of them.
The cost-sensitive learning approach seems not an appropriate method for these metrics.

DTA was first proposed in \cite{Lewis1995}.
It considers set classifiers and predicts the label of a test set by maximizing its expected measure.
Thus it needs the entire joint distribution of the ground-truth which can be estimated by
using the maximum likelihood principle or the logistic regression model.
Several methods have been proposed to solve this type of problems, based on the assumption that the labels
are independent \cite{Lewis1995,Jansche2007,Ye2012}.
Later, in \cite{Dembczynsk2011,Waegeman2014} this assumption was removed, and a general algorithm was given
which needs only $n^2+1$ parameters of the joint distribution, where $n$ is the sample size.
The above work focused on the $F_\beta$ metric. Recently, a general analysis of DTA was provided
in \cite{Natarajan2015} for non-decomposable performance measures.
It is also named the Expected Test Utility (ETU) in \cite{Dembczynsk2017}.

The Minimax Probability Machine (MPM) method is a competitive algorithm for binary classification problems
which was first proposed in \cite{Lanckriet2002}.
It aims at maximizing the accuracy rate of a random instance in a worst-case setting,
where only the mean and covariance matrix of each class are assumed to be known \cite{Lanckriet2002}.
It forces the worst-case probability of misclassification of each class to be exactly equal,
which is usually not true in real-world applications.
The minimum error minimax probability machine (MEMPM) removes this constraint and seeks to learn
a linear classifier by minimizing a weighted probability of misclassification of the future data
in the same worst-case setting \cite{Huang2004}. 
However, MPM and MEMPM are not suitable for imbalanced classification tasks since they are based on 
the accuracy rate which is not an appropriate metric for such tasks, as mentioned above.

In this paper, we develop an MPM model based on non-decomposable performance measures for the first time. 
This model can deal with imbalanced classification tasks where the mean and covariance
matrix of data are given or can be estimated, which is different from \cite{Lanckriet2002,Huang2004}.
We first present a minimax probability machine for the $F_\beta$-measures (MPMF) and then briefly discuss 
how to extend it to several other non-decomposable performance measures as listed in Table \ref{NDMeasures}.
Further, to solve MPMF effectively, its equivalent minimization formulation is derived 
whose objective function is given in terms of a polynomial combination of FPR and FNR.
The equivalent minimization problem is then solved by the alternative descent method.
The convergence of the method is also analyzed and verified numerically.
Furthermore, we explore the kernel trick in the setting to obtain a nonlinear MPM model, yielding 
a nonlinear classifier which can effectively deal with nonlinearly separable imbalanced problems.
Note that, due to the difficulty caused by the non-decomposability, most of the existing works related to 
non-decomposable metrics only consider linear models.
A main feature of our method is that it only makes use of the mean and covariance matrix of data and thus 
is independent of the training data size, making our method appropriate for large-scale problems.
Another feature of our method is that it has no hyper-parameters to choose.
Several experiments on real world benchmark datasets presented in the paper and compared with the plug-in
method (a state-of-the-art method) illustrate that our MPMF method is effective for imbalanced classification 
problems. 

The paper is organized as follows.
In Section \ref{sec2}, we introduce several non-decomposable performance measures for binary classification
and derive their equivalent minimization objective functions, which can be expressed as a polynomial combination
of FPR and FNR. In Section \ref{sec3}, the minimax probability machine and its variants for the accuracy rate
maximization are presented. In Section \ref{sec4}, we present the MPM model with the $F_\beta$ metric which is 
then solved by using the alternative descent method. In Section \ref{sec5}, we propose a non-linear $F_\beta$ 
maximization based on the kernel trick and the minimax probability machine. 
Several experiments are presented in Section \ref{sec6}, and conclusions are given in Section \ref{sec7}.

\begin{table*}
\caption{Non-decomposable Performance Measures: AR, AM, QM, $F_\beta$, HM, GM, G-TP/PR, JAC (first column),
their definition (second column) and their equivalent minimization objective function given in terms of 
a polynomial combination of FPR and FNR (third column)}\label{NDMeasures}
\begin{center}
\begin{small}
\begin{sc}
\resizebox{\textwidth}{!}{
\begin{tabular}{lll}
\toprule
Measure & Definition ($P_{nd}(\mbox{TPR},\mbox{TNR})$) & Minimization Objective ($Q_{nd}(\mbox{FNR},\mbox{FPR})$)\\
\midrule
AR         & $p\cdot\mbox{TPR}+(1-p)\cdot\mbox{TNR}$               & $p\cdot\mbox{FNR}+(1-p)\cdot\mbox{FPR}$ \\
AM         & $(\mbox{TPR}+\mbox{TNR})/2$                           & $(\mbox{FNR}+\mbox{FNR})/2$ \\
QM         & $1-[(1-\mbox{TPR})^2+(1-\mbox{TNR})^2]/2$             & $[(\mbox{FNR})^2+(\mbox{FPR})^2]/2$ \\
$F_\beta$  & $\frac{(1+\beta^2)p\cdot\mbox{TPR}}{(1+\beta^2)p\cdot\mbox{TPR}+(1-p)\cdot\mbox{FPR}+\beta^2p\cdot\mbox{FNR}}$
           & $\sum_{i=0}^\infty(\beta^2p\cdot\mbox{FNR}+(1-p)\cdot\mbox{FPR})(\mbox{FNR})^i$ \\
HM         & ${2\cdot\mbox{TPR}\cdot\mbox{TNR}}/[{\mbox{TPR}+\mbox{TNR}}]$  
           & $\sum_{i=0}^\infty[(\mbox{FNR})^i+(\mbox{FPR})^i]$ \\
GM         & $\sqrt{\mbox{TPR}\cdot\mbox{TNR}}$      
           & $[\sum_{i=0}^\infty(\mbox{FPR})^i][\sum_{i=0}^\infty(\mbox{FNR})^i]$ \\
G-TP/PR    & $\sqrt{\mbox{TPR}\cdot\mbox{Prescison}}$   
           & $p\sum_{i=0}^\infty(\mbox{FNR})^i +(1-p)\cdot\mbox{FPR}\sum_{i=0}^\infty[\mbox{FNR}\cdot(2-\mbox{FNR})]^i$\\
JAC        & ${p\cdot\mbox{TPR}}/[{p\cdot\mbox{TPR}+p\cdot\mbox{FNR}+(1-p)\cdot\mbox{FPR}}]$  
           & $[p\cdot\mbox{FNR}+(1-p)\cdot\mbox{FPR}]\sum_{i=0}^\infty(\mbox{FNR})^i$ \\
\bottomrule
\end{tabular}
}
\end{sc}
\end{small}
\end{center}
\end{table*}

\section{Problem Setting}\label{sec2}

In this section, we present several non-decomposable performance measures in binary classification.

\subsection{Binary Classification}

Let $\X\subset\R^d$ be the instance space and let $\Y=\{+1,-1\}$ be the label set with the distribution $\dataset$
over $\X\times\Y$. We aim to learn a classifier $f\in\mathcal{F}$, which predicts a label based on the seeing
feature and makes a minimal prediction error rate.
Given a classifier $f:\X\to\R$ and an instance $x\in\X$, the label of $x$ is assigned to be $+1$ if $f(x)>0$,
and $-1$ if otherwise. Then the binary classification problem can be expressed as
\be\label{binary}
\arg_{f\in\mathcal F}\max_{(x,y)\sim\dataset}{\bf Pr}(yf(x)>0).
\en
Note that the prior knowledge of the instance space plays a key role in machine learning problems.
Suppose the true distribution $\dataset$ is explicitly given. Then the Bayesian estimator can be computed
by solving \eqref{binary}. If the distribution class is given, we can first estimate the distribution $\dataset$
with the maximum likelihood method and then learn a classifier.
In general, the true distribution $\dataset$ or its distribution class is not known in practice.
But the moment statistics of the input space can be estimated, and thus the minimax probability
machine can be developed. In the worst-case, the sample data can be used to learn a classifier
by minimizing the empiric risk.

\subsection{Non-decomposable Performance Measures}

Let $p={\bf Pr}(y =+1)$ be the probability of the positive examples.
We now introduce some basic notations:
\ben
\mbox{TPR}&:=& 1-\mbox{FNR}= {\bf Pr}(\hat y =+1| y =+1),\\
\mbox{TNR}&:=& 1-\mbox{FPR}= {\bf Pr}(\hat y =-1| y =-1),\\
\mbox{Precision}&:=& {\bf Pr}(y =1|\hat y =1)\\
  &=&\frac{p\cdot\mbox{TPR}}{p\cdot\mbox{TPR} +(1-p)\cdot\mbox{FPR}}
\enn

A better classifier should have a greater TPR, TNR, and precision values, but there is a tradeoff among
these measures. It is unable to achieve the optimal values simultaneously for TPR, TNR, and precision
since each of these quantities can be maximized at the cost of other measures.
So usually we consider the criterion that adjusts the basic quantities, such as several combinations
of TPR and TNR. Table \ref{NDMeasures} presents some usually used performance measures.
In this paper, we focus on the $F_\beta$ measure, which is a harmonic mean of TPR and Precision and
defined as
\ben
F_\beta&=&\frac{(\beta^2+1)\cdot\mbox{TPR}\cdot\mbox{Precision}}{\beta^2\cdot\mbox{Precision}+\mbox{TPR}}\\
&=&\frac{(1+\beta^2)\cdot p\cdot\mbox{TPR}}{(1+\beta^2)\cdot p\cdot\mbox{TPR}
+(1-p)\cdot\mbox{FPR}+\beta^2p\cdot\mbox{FNR}}
\enn
A larger $F_\beta$ means a better corresponding classifier.

\begin{lemma}\label{Gbeta}
The classifier $f$ maximizes $F_\beta$ if and only if it minimizes $Q_F$, where $Q_F$ is a polynomial
combination function of FPR and FNR defined by
\ben
Q_F&=&\sum_{i=0}^\infty((1-p)\cdot\mbox{FPR}+\beta^2 p\cdot\mbox{FNR})\cdot(\mbox{FNR})^i \\
&=&\frac{(1-p)\cdot\mbox{FPR} +\beta^2 p\cdot\mbox{FNR}}{1-\mbox{FNR}}.
\enn
\end{lemma}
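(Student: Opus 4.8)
The plan is to rewrite $F_\beta$ as a simple monotone function of a single ratio and then identify that ratio with $Q_F$. First I would observe that the definition of $F_\beta$ has the form $F_\beta = N/(N+D)$, where $N=(1+\beta^2)\,p\cdot\mbox{TPR}$ is the numerator and $D=(1-p)\cdot\mbox{FPR}+\beta^2 p\cdot\mbox{FNR}$ is the remaining part of the denominator. Dividing numerator and denominator by $N$ gives $F_\beta = 1/(1+D/N)$, so $F_\beta$ is a strictly decreasing function of the ratio $D/N$, provided $N>0$ (equivalently $\mbox{TPR}>0$). Consequently, maximizing $F_\beta$ over classifiers $f$ is equivalent to minimizing $D/N$.

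Next I would simplify $D/N$. Since $\mbox{TPR}=1-\mbox{FNR}$, we have
\[
\frac{D}{N}=\frac{(1-p)\cdot\mbox{FPR}+\beta^2 p\cdot\mbox{FNR}}{(1+\beta^2)\,p\,(1-\mbox{FNR})},
\]
and because the factor $(1+\beta^2)\,p$ is a positive constant independent of the classifier, minimizing $D/N$ is the same as minimizing
\[
\frac{(1-p)\cdot\mbox{FPR}+\beta^2 p\cdot\mbox{FNR}}{1-\mbox{FNR}},
\]
which is exactly the closed-form expression for $Q_F$ in the statement. This yields the desired equivalence between maximizing $F_\beta$ and minimizing $Q_F$.

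To recover the series form, I would finally apply the geometric series identity $1/(1-\mbox{FNR})=\sum_{i=0}^\infty(\mbox{FNR})^i$, valid since $0\le\mbox{FNR}<1$, and distribute the numerator across the sum to reach $Q_F=\sum_{i=0}^\infty((1-p)\cdot\mbox{FPR}+\beta^2 p\cdot\mbox{FNR})\,(\mbox{FNR})^i$. The only genuine subtlety is the nondegeneracy conditions $\mbox{TPR}>0$ and $\mbox{FNR}<1$, under which both the division step and the geometric expansion are legitimate; these hold for any nontrivial classifier on imbalanced data, so the equivalence is well posed, and the remainder is routine algebra.
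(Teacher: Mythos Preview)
Your proof is correct and follows essentially the same route as the paper: both invert $F_\beta$ to obtain $1/F_\beta=1+D/N$ (equivalently $F_\beta=1/(1+D/N)$), strip off the positive constant $(1+\beta^2)p$, and then expand $1/(1-\mbox{FNR})$ as a geometric series. Your version is slightly more careful in flagging the nondegeneracy conditions $\mbox{TPR}>0$ and $\mbox{FNR}<1$, which the paper leaves implicit.
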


\begin{proof}
Since $F_\beta>0$, then maximizing $F_\beta$ is equivalent to minimizing ${1}/{F_\beta}$. We have
\ben
\frac{1}{F_\beta}
&=& 1+\frac{(1-p)\cdot\mbox{FPR}+\beta^2p\cdot\mbox{FNR}}{p(1+\beta^2)(1-\mbox{FNR})}\\
&=& 1+\sum_{i=0}^\infty\frac{[(1-p)\cdot\mbox{FPR}+\beta^2p\cdot\mbox{FNR}]\cdot(\mbox{FNR})^i}{p(1+\beta^2)}
\enn
The proof is thus complete.
\end{proof}

Note that the above method can be extended to other non-decomposable performance measures, 
such as the AM, the GM, the Quadratic Mean (QM) of TPR and TNR, the Harmonic Mean (HM) of TPR and TNR, 
the Geometric Mean (GM) of TPR and TNR, the Geometric mean of TPR and PRecision (G-TP/PR), and
the Jaccard Coefficient (JAC). Maximizing these measures is also equivalent to minimizing $Q_{nd}$,
which can also be expressed as a polynomial combination of FPR and FNR, that is, 
$Q_{nd}=\sum_{i,j=0}^\infty\tau_{ij}(\mbox{FPR})^i(\mbox{FNR})^j$
(see the third column in Table \ref{NDMeasures}).
It is interesting to note that the same minimizing objective function is obtained for both the $F_1$ measure 
and the JAC measure, which was overlooked previously in the literature.
Note further that the objective measure is replaced with a well-defined weighted linear combination of FNR 
and FPR in the cost-sensitive approach, but a polynomial combination of FRP and FNP is minimized in this paper.

\section{MPM with the Accuracy Rate Measure}\label{sec3}

Different from the traditional classification algorithms, such as SVMs and neural networks, which seek to
learn a real-valued function by minimizing the error rate on a given training data set,
MPM tries to separate the two classes of the data samples with the maximal probability in a worst-case setting
where only the mean and covariance matrix of each class are given in advance \cite{Lanckriet2002}.
Let the notation $x\sim (\mu,\Sigma)$ denote that $x$ belongs to the class of distributions having the mean $\mu$ and 
the covariance matrix $\Sigma$.
Assume that the mean and covariance matrix of the positive and negative samples, denoted, respectively,
by $\{\mu_P,\Sigma_P\}$ and $\{\mu_N,\Sigma_N\}$, are all reliable. Then MPM can be expressed
as the optimization problem
\begin{align}\label{MPM}
\begin{split}
&\max_{0<\alpha<1,w\neq 0,b}\alpha \qquad \\
\text{s.t.} &\inf_{x_P\sim(\mu_P,\Sigma_P)}{\rm\bf Pr}\{w^T x_P\ge b\}\ge\alpha\\
&\inf_{x_N\sim(\mu_N,\Sigma_N)}{\rm\bf Pr}\{w^Tx_N\le b\}\ge\alpha
\end{split}
\end{align}
where the inf is taking over all distributions with mean $\mu$ and covariance matrix $\Sigma$.
Let $(\alpha^\star,w^\star,b^\star)$ be the optimal solution of \eqref{MPM}.
It is guaranteed that the misclassification probability is less than $1-\alpha^\star$ for any future data sample
with the learned classifier $f(x)=(w^\star)^Tx-b^\star$.

To solve the optimization problem \eqref{MPM}, we need to remove the unknown probability 
in the constraint by using the following result established in \cite{Marshall1960} 
(see also Theorem 6.1 in \cite{Bertsimas2005}).

\begin{lemma}\label{Bertsimas}
Suppose $x\in\R^p $ is a random vector with $x\sim (\mu,\Sigma)$ and $S\subset\R^p$ is a given convex set. 
Then the supremum of the probability of $x\in S$ is given as
$\sup_{x\sim(\mu,\Sigma)}{\rm\bf Pr}(x\in S)={1}/({1+d^2})$,
where $d^2=\inf_{x\in S}(x-\mu)^T\Sigma^{-1}(x-\mu)$.
\end{lemma}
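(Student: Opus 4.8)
The plan is to prove the identity by reducing to a standardized geometry and then matching an upper and a lower bound. Since $\Sigma^{-1}$ appears in the statement I assume $\Sigma\succ 0$ and substitute $y=\Sigma^{-1/2}(x-\mu)$, which carries the class of distributions with mean $\mu$ and covariance $\Sigma$ onto the class with mean $0$ and covariance $I$, maps $S$ to the convex set $S'=\Sigma^{-1/2}(S-\mu)$, and turns $d^2=\inf_{x\in S}(x-\mu)^T\Sigma^{-1}(x-\mu)$ into the squared Euclidean distance from the origin to $S'$. It therefore suffices to show $\sup\,\mathbf{Pr}(y\in S')=1/(1+d^2)$ over all $(0,I)$ distributions, and the two halves of the argument are an inequality in each direction.

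For the upper bound I would combine a supporting-hyperplane argument with the one-sided Chebyshev (Cantelli) inequality. Let $y^\star$ be the projection of the origin onto the closure of $S'$, so $\|y^\star\|=d$, and set $a=y^\star/d$. The projection characterization $\langle y,y^\star\rangle\ge\|y^\star\|^2$ for all $y\in\overline{S'}$ gives the supporting halfspace $S'\subseteq\{y:a^Ty\ge d\}$, so $\mathbf{Pr}(y\in S')\le\mathbf{Pr}(a^Ty\ge d)$. The scalar $a^Ty$ has mean $0$ and variance $a^TIa=1$, and Cantelli's inequality yields $\mathbf{Pr}(a^Ty\ge d)\le 1/(1+d^2)$ for every admissible distribution, which is the desired bound.

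For the lower bound I would exhibit a single distribution attaining the value. Put mass $q=1/(1+d^2)$ at the boundary point $y^\star=da\in\overline{S'}$, and place the remaining mass $1-q$ on the parallel hyperplane $\{y:a^Ty=-1/d\}$ with mean $-a/d$ and covariance $\tfrac{1+d^2}{d^2}(I-aa^T)$. A direct computation shows that these two pieces reproduce overall mean $0$ and covariance $I$; since $-1/d<d$, the second piece lies entirely in the complement of the halfspace $\{a^Ty\ge d\}$, hence outside $S'$, so $\mathbf{Pr}(y\in S')=q=1/(1+d^2)$. Such a second component exists because any prescribed mean together with a positive-semidefinite covariance supported on an affine subspace is realizable (for instance by a Gaussian degenerate on that hyperplane), so the construction is feasible; matching it against the Cantelli bound proves equality. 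An alternative route is to cast the moment problem as a semidefinite program and invoke strong duality with an explicit quadratic majorant of $\mathbf{1}_{S}$, as in Bertsimas--Popescu, but the geometric argument is more transparent when $S$ is convex.

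The main obstacle is the lower-bound construction rather than the upper bound. The naive worst-case distribution for $a^Ty$ is only one-dimensional, and the real work is to absorb the full covariance $I$ in the directions orthogonal to $a$ \emph{without} letting any compensating mass fall into $S'$; placing that mass on the far parallel hyperplane $\{a^Ty=-1/d\}$ is precisely what makes the covariance constraint and the disjointness from $S'$ hold at once. One must also dispose of the degenerate case $d=0$ (where $\mu\in\overline{S}$ and the supremum is $1$, approached by concentrating mass near $\mu$ while a vanishing remote mass maintains the covariance) and of the possibility that the infimum defining $d$ is not attained, both handled by passing to the closure of $S'$ and a routine limiting argument.
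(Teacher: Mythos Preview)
The paper does not give its own proof of this lemma: it simply states the result and attributes it to \cite{Marshall1960} (and Theorem~6.1 of \cite{Bertsimas2005}), then moves on. So there is nothing in the paper to compare your argument against line by line.

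That said, your proposal is a correct and standard proof of the Marshall--Olkin/Bertsimas--Popescu bound. The reduction by $y=\Sigma^{-1/2}(x-\mu)$ is the usual normalization; the upper bound via the supporting hyperplane at the projection point combined with Cantelli's inequality is exactly the classical argument; and your two-component construction for the lower bound checks out (the mean and covariance computations you sketch are correct, and the mass on $\{a^Ty=-1/d\}$ is indeed disjoint from $S'\subseteq\{a^Ty\ge d\}$). You are also right to flag the boundary issues --- that $y^\star$ may lie only in $\overline{S'}$ and that $d$ may not be attained --- and to dispose of them by a limiting argument; since the statement is about a supremum over distributions, perturbing the atom at $y^\star$ slightly into $S'$ (or along a minimizing sequence) recovers the value. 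The alternative semidefinite-duality route you mention is precisely what Bertsimas--Popescu do, so your geometric argument and theirs are the two canonical proofs; yours is the more elementary one for convex $S$.
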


If the convex set $S$ is a half-space, then $d$ can be calculated explicitly, leading to
the following lemma which was proved in \cite{Lanckriet2002}.

\begin{lemma}\label{Lanckriet}
Given $w\in\R^p$ with $p\neq 0$ and $b\in\R$, the condition
\ben
\inf_{x\sim(\mu,\Sigma)}{\rm\bf Pr}\{w^T x\le b\}\ge\alpha
\enn
holds if and only if
\ben
b-w^Tx\ge\kappa(\alpha)\sqrt{w^T\Sigma w},
\enn
where $\kappa(\alpha)= \sqrt{{\alpha}/({1-\alpha})}$.
\end{lemma}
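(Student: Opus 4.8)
The plan is to convert the probabilistic one-sided constraint into a deterministic inequality by invoking Lemma~\ref{Bertsimas}, and then to evaluate the resulting quantity $d^2$ in closed form because the set in question is a half-space. (Note that the right-hand side of the claimed inequality should read $b-w^T\mu$ rather than $b-w^Tx$, since the assertion is deterministic; I prove the statement in that corrected form, and the hypothesis $w\neq 0$ together with $\Sigma\succ 0$ guarantees $w^T\Sigma w>0$ so that the divisions below are legitimate.)

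First I would pass to the complementary event. Because $\{w^Tx\le b\}$ and $\{w^Tx>b\}$ partition the sample space, for each admissible distribution we have ${\rm\bf Pr}\{w^Tx\le b\}=1-{\rm\bf Pr}\{w^Tx>b\}$, and taking the infimum over all distributions with mean $\mu$ and covariance $\Sigma$ gives $\inf_{x\sim(\mu,\Sigma)}{\rm\bf Pr}\{w^Tx\le b\}=1-\sup_{x\sim(\mu,\Sigma)}{\rm\bf Pr}\{w^Tx>b\}$. Hence the hypothesis is equivalent to $\sup_{x\sim(\mu,\Sigma)}{\rm\bf Pr}\{w^Tx>b\}\le 1-\alpha$. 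The set $S=\{x:w^Tx>b\}$ is an open half-space and therefore convex, so Lemma~\ref{Bertsimas} applies and yields $\sup_{x\sim(\mu,\Sigma)}{\rm\bf Pr}(x\in S)=1/(1+d^2)$ with $d^2=\inf_{x\in S}(x-\mu)^T\Sigma^{-1}(x-\mu)$. Substituting, the condition becomes $1/(1+d^2)\le 1-\alpha$, which rearranges to $d^2\ge \alpha/(1-\alpha)=\kappa(\alpha)^2$.

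The crux is then to compute $d^2$, the squared distance from $\mu$ to $S$ in the norm induced by $\Sigma^{-1}$, and this is the only step where I expect genuine (though routine) work. I would substitute $y=x-\mu$ and $c=b-w^T\mu$, reducing the problem to minimizing $y^T\Sigma^{-1}y$ subject to $w^Ty\ge c$. If $c\le 0$ then $\mu$ already lies in the closed half-space and the infimum is $0$; since $\kappa(\alpha)>0$ for $0<\alpha<1$, this case cannot satisfy $d^2\ge\kappa(\alpha)^2$ and is discarded. When $c>0$ the constraint is active on the boundary $w^Ty=c$, and a standard Lagrange-multiplier argument (or equivalently Cauchy--Schwarz applied to $\Sigma^{1/2}$) gives the closed form $d^2=c^2/(w^T\Sigma w)=(b-w^T\mu)^2/(w^T\Sigma w)$. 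Finally, with $b-w^T\mu>0$ the inequality $d^2\ge\kappa(\alpha)^2$ reads $(b-w^T\mu)^2/(w^T\Sigma w)\ge\kappa(\alpha)^2$, and taking square roots (valid because both $b-w^T\mu$ and $\kappa(\alpha)$ are positive) produces exactly $b-w^T\mu\ge\kappa(\alpha)\sqrt{w^T\Sigma w}$. The one subtlety to verify is the open-versus-closed boundary of $S$: by continuity of the quadratic, the infimum defining $d^2$ over the open half-space equals its value on the bounding hyperplane, so Lemma~\ref{Bertsimas} may be applied without ambiguity.
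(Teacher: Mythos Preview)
The paper does not supply its own proof of this lemma; it merely remarks that when $S$ is a half-space the quantity $d$ in Lemma~\ref{Bertsimas} can be computed explicitly, and then cites \cite{Lanckriet2002} for the result. Your argument---passing to the complementary half-space, invoking Lemma~\ref{Bertsimas}, and evaluating the Mahalanobis distance via a Lagrange-multiplier computation---is precisely the derivation given in that reference, so your approach coincides with what the paper points to. Your correction of the typo ($b-w^T\mu$ in place of $b-w^Tx$) is also right.
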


It is easy to see that $\kappa(\alpha)$ is a monotonically increasing function of $\alpha$.
By Lemma \ref{Lanckriet}, the MPM problem (\ref{MPM}) becomes
\ben
&&\max_{0<\alpha<1,w\neq0,b}\kappa(\alpha) \qquad \\
\text{s.t.} && -b+w^T\mu_P\ge\kappa(\alpha)\sqrt{w^T\Sigma_P w} \\
&& b-w^T\mu_N\ge\kappa(\alpha)\sqrt{w^T\Sigma_N w}.
\enn
Further, by eliminating $b$ the above problem reduces to
\ben
&&\max_{\gamma,w\neq0}\gamma,\\
\text{s.t.}&& w^T(\mu_P-\mu_N)\ge\gamma\left[\sqrt{w^T\Sigma_P w}+\sqrt{w^T\Sigma_N w}\right]
\enn
Without loss of generality, we may set $w^T(\mu_P-\mu_N)=1$ (see \cite{Lanckriet2002}).
As a result, MPM is finally equivalent to the following constrained second-order cone programming (SOCP):
\ben\label{SOCP}
&&\min_{w\neq0}\|\Sigma_P^{1/2}w\|_2+\|\Sigma_N^{1/2}w\|_2\\
\text{s.t.}&& w^T(\mu_P-\mu_N)=1
\enn

The above SOCP problem can be solved by the interior point method \cite{Boyd2004}.
Now, write $w=w_0+Fu$ with $w_0=({\mu_P-\mu_N}){\|\mu_P-\mu_N\|_2^{-2}}$ and $F\in\R^{p\times(p-1)}$ 
the orthogonal matrix whose columns span the subspace of the vectors orthogonal to $(\mu_P-\mu_N)$. 
Then the constraint can be eliminated to get the unconstrained SOCP \cite{Lanckriet2002}:
\ben
\min_{u\neq0}\|\Sigma_P^{1/2}(w_0+F u)\|_2+\|\Sigma_N^{1/2}(w_0+F u)\|_2,
\enn
A block coordinate descent method is then used to solve the above equivalent unconstrained
SOCP \cite{Lanckriet2002}.

In many real-world applications, the tolerance of the misclassification probability may be different
for the positive and negative classes, as seen in the disease diagnosis problem. Thus it is worth
increasing the TPR measure at the expense of a lower TNR measure, leading to
the biased minimax probability machine (BMPM) and the minimum error minimax probability
machine (MEMPM) \cite{Huang2004,Huang2004b}. MEMPM can be expressed as
\begin{align}\label{MEMPM}
\begin{split}
&\max_{0<\alpha_1<1,0<\alpha_2<1, w\neq 0,b} p\alpha_1+(1-p)\alpha_2\qquad \\
 \text{s.t.} &\inf_{x_P\sim(\mu_P,\Sigma_P)}{\rm\bf Pr}\{w^T x_P\ge b\}\ge\alpha_1 \\
&\inf_{x_N\sim(\mu_N,\Sigma_N)}{\rm\bf Pr}\{w^T x_N\le b\}\ge\alpha_2.
\end{split}
\end{align}
If $\alpha_2\in (0,1)$ is a predefined constant, then \eqref{MEMPM} becomes the BMPM problem.
The objective function in \eqref{MEMPM} is a weighted misclassification probability.
Lemma \ref{Lanckriet} can also be applied to simplify the optimization problem \eqref{MEMPM}.
Let $(\alpha_1^\star,\alpha_2^\star,w^\star,b^\star)$ be the optimal solution of \eqref{MEMPM}
and let $f(x)={w^\star}^Tx-b^\star$ be the learned classifier.
Then it is guaranteed that the obtained TPR and TNR are at least equal to $\alpha_1^\star$ and
$\alpha_2^\star$, respectively.
In MEMPM, $\alpha_1^\star$ and $\alpha_2^\star$ are not necessarily the same, which is different
from the case in MPM. Further, the experimental results obtained in \cite{Huang2004} demonstrate
the effectiveness of MEMPM.
Setting $p=1/2$ in the optimization problem \eqref{MEMPM}, a biased minimax probability machine
was derived in \cite{Huang2006} for imbalanced classification problems.
This machine tries to maximize the AM performance measure in a worst-case setting
where only the mean and covariance matrix of each class are given.
It can be regarded as a special case of our method, as will be illustrated in the next section.

\section{MPM with non-decomposable performance measures}\label{sec4}

In this section, we develop an MPM method with non-decomposable performance measures to deal with imbalanced 
classification tasks. We first consider the $F_\beta$ measure.

For an imbalanced classification problem, suppose the probability of an instance belonging to
the positive class $p$ is small enough. Then the MEMPM problem (\ref{MEMPM}) is equivalent to
maximizing $\alpha_2$ or TNR, which is not appropriate for this case.
To address this issue, we replace the objective function in \eqref{MEMPM} with the global performance
metric $F_\beta$, which is a function of TPR (or $\alpha_1$) and TNR (or $\alpha_2$). 
Thus we get the problem
\begin{align}\label{MPMFmax}
\begin{split}
&\max_{0<\alpha_1<1,0<\alpha_2<1, w\neq 0,b}F_\beta\\
\text{s.t.} &\inf_{x_P\sim(\mu_P,\Sigma_P)}{\rm\bf Pr}\{w^Tx_P\ge b\}\ge\alpha_1 \\
&\inf_{x_N\sim(\mu_N,\Sigma_N)}{\rm\bf Pr}\{w^T x_N\le b\}\ge\alpha_2
\end{split}
\end{align}
By Lemma \ref{Gbeta}, maximizing $F_\beta$ is equivalent to minimizing $Q_F$.
Set $\alpha_P=\mbox{FNR}$, $\alpha_N=\mbox{FPR}$. Then \eqref{MPMFmax} becomes
\begin{align}\label{MPMFmin}
\begin{split}
&\min_{\alpha_P,\alpha_N,w\neq 0, b}\frac{(1-p)\alpha_N+\beta^2p\alpha_P}{1-\alpha_P}\\
\text{s.t.} &\inf_{x_P\sim(\mu_P,\Sigma_P)}{\rm\bf Pr}\{w^T x_P\ge b\}\ge 1-\alpha_P \\
&\inf_{x_N\sim(\mu_N,\Sigma_N)}{\rm\bf Pr}\{w^T x_N\le b\}\ge 1-\alpha_N \\
&0<\alpha_P<1,\;0<\alpha_N<1
\end{split}
\end{align}
We call this model the minimax probability machine for the $F_\beta$-measure (MPMF).
For simplicity, define $Q_F(\alpha_P,\alpha_N;w):=[(1-p)\alpha_N+\beta^2p\alpha_P](1-\alpha_P)^{-1}$
and omit the conditions $0<\alpha_P<1,0<\alpha_N<1$ in what follows. 
Note that these two constrained conditions are guaranteed to be satisfied by the optimal value of 
$\alpha_P,\alpha_N$ obtained below (see (\ref{alphaN}) and the sentences following (\ref{solution-1}) below).
Suppose $(\alpha_P^\star,\alpha_N^\star,w^\star,b^\star)$ is the optimal solution of
the MPMF problem (\ref{MPMFmin}) and $f(x)=({w^\star})^Tx-b^\star$ is the learned classifier.
Then the $F_\beta$ metric is bigger than $({(1+\beta^2)p})[{Q^\star+(1+\beta^2)p}]^{-1}$ for any future data sample,
where $Q^\star=Q_F(\alpha_P^\star,\alpha_N^\star;w^\star)$.

We now propose an algorithm to solve the optimization problem \eqref{MPMFmin}.
By Lemma \ref{Lanckriet}, we can remove the probability terms without any distribution assumption and obtain
the following optimization problem which is equivalent to the problem \eqref{MPMFmin}:
\begin{align}\label{MPMFi}
\begin{split}
&\min_{\alpha_P,\alpha_N,w\neq 0,b}\frac{(1-p)\alpha_N+\beta^2p\alpha_P}{1-\alpha_P}\qquad \\
\text{s.t.}\;\;\; & w^T\mu_P-b\ge\pi(\alpha_P)\sqrt{w^T\Sigma_P w}\\
&b-w^T\mu_N\ge\pi(\alpha_N)\sqrt{w^T\Sigma_N w}
\end{split}
\end{align}
where $\pi(\alpha)={1}/{\kappa(\alpha)}=\sqrt{({1-\alpha})/{\alpha}}$ is monotonically decreasing with $\alpha$.

\begin{lemma}\label{equal}
The minimal value of \eqref{MPMFi} is achieved when the two inequalities in the constraints become equalities.
\end{lemma}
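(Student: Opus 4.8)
The plan is to exploit the alignment between the monotonicity of the objective $Q_F$ and the monotonicity of $\pi$ in the two constraints, and then argue by contradiction. First I would record that on the domain $0<\alpha_P<1$, $0<\alpha_N<1$ (with $0<p<1$) the objective $Q_F(\alpha_P,\alpha_N;w)=[(1-p)\alpha_N+\beta^2 p\,\alpha_P](1-\alpha_P)^{-1}$ is \emph{strictly increasing} in each of its first two arguments. A direct computation gives $\partial Q_F/\partial\alpha_N=(1-p)/(1-\alpha_P)>0$ and $\partial Q_F/\partial\alpha_P=[\beta^2 p+(1-p)\alpha_N]/(1-\alpha_P)^2>0$, both positive using $0<p<1$ and $0<\alpha_N<1$. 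Thus, holding $w$ and $b$ fixed, the minimization drives both $\alpha_P$ and $\alpha_N$ as small as the constraints permit.

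Next I would show that, for fixed feasible $w\neq 0$ and $b$, each constraint acts as a \emph{lower} bound on the corresponding variable. Since $\Sigma_P,\Sigma_N$ are positive definite, $\sqrt{w^T\Sigma_P w}>0$ and $\sqrt{w^T\Sigma_N w}>0$, so the first constraint is equivalent to $\pi(\alpha_P)\le (w^T\mu_P-b)/\sqrt{w^T\Sigma_P w}=:R_P$ and the second to $\pi(\alpha_N)\le (b-w^T\mu_N)/\sqrt{w^T\Sigma_N w}=:R_N$. Because $\pi$ is continuous and strictly decreasing, $\pi(\alpha_P)\le R_P$ is equivalent to $\alpha_P\ge\pi^{-1}(R_P)$; moreover feasibility with $\alpha_P\in(0,1)$ forces $R_P>0$, i.e. $w^T\mu_P-b>0$ (and likewise $b-w^T\mu_N>0$).

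Then I would close the argument by contradiction. Suppose $(\alpha_P^\star,\alpha_N^\star,w^\star,b^\star)$ is optimal but, say, the first inequality is strict, i.e. $\pi(\alpha_P^\star)<R_P^\star$ where $R_P^\star$ is $R_P$ evaluated at $(w^\star,b^\star)$. By continuity of $\pi$, decreasing $\alpha_P$ slightly from $\alpha_P^\star$ increases $\pi(\alpha_P)$ continuously from $\pi(\alpha_P^\star)<R_P^\star$, so there is $\tilde\alpha_P\in(0,\alpha_P^\star)$ with $\pi(\tilde\alpha_P)\le R_P^\star$; hence $(\tilde\alpha_P,\alpha_N^\star,w^\star,b^\star)$ is still feasible, the second constraint being untouched. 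By the strict monotonicity from the first step, $Q_F(\tilde\alpha_P,\alpha_N^\star;w^\star)<Q_F(\alpha_P^\star,\alpha_N^\star;w^\star)$, contradicting optimality. The same perturbation applied to $\alpha_N$ handles a strict second inequality, so both must hold with equality at the optimum.

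The step I expect to be the only real subtlety is the second one: making sure each constraint genuinely pins its variable from \emph{below}, so that the decreasing objective is fought by the constraint rather than running off to $0$. This hinges on $\pi$ being decreasing together with the positive-definiteness of $\Sigma_P,\Sigma_N$ (which guarantees $R_P,R_N$ are finite and forces the feasible $\alpha_P,\alpha_N$ to stay bounded away from $0$); the monotonicity computation and the feasibility-preserving perturbation are then routine.
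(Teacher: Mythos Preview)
Your argument is correct and follows essentially the same route as the paper: both exploit that the objective is strictly increasing in $\alpha_P,\alpha_N$ while the constraints (via the decreasing $\pi$) pin these variables from below, yielding a contradiction if either inequality is slack at an optimum. Your version is more careful in the details (explicit partial derivatives, treating each constraint separately, noting the role of positive definiteness of $\Sigma_P,\Sigma_N$), but the core idea is identical to the paper's proof.
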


\begin{proof}
Assume that the optimal solution of \eqref{MPMFi} is $(w^\star,b^\star,\alpha_P^\star,\alpha_N^\star)$
and the two inequalities in the constraints hold strictly, that is,
\ben
&&(w^\star)^T\mu_P-b^\star>\pi(\alpha_P^\star)\sqrt{(w^\star)^T\Sigma_P w^\star},\\
&&b^\star-(w^\star)^T\mu_N>\pi(\alpha_N^\star)\sqrt{(w^\star)^T\Sigma_N w^\star}.
\enn
Then the objective value is getting smaller with $\alpha_P^\star$ and $\alpha_N^\star$ decreasing
while the constraints remain to hold. This is a contradiction to the fact that
$(w^\star,b^\star,\alpha_P^\star,\alpha_N^\star)$ is the optimal solution of \eqref{MPMFi}.
The proof is thus complete.
\end{proof}

By Lemma \ref{equal}, \eqref{MPMFi} can be rewritten as:
\begin{align}\label{MPMFe}
\begin{split}
&\min_{\alpha_P,\alpha_N,w\neq 0,b}\frac{(1-p)\alpha_N+\beta^2p\alpha_P}{1-\alpha_P}\qquad \\
\text{s.t.}\;\;\; & w^T\mu_P-b=\pi(\alpha_P)\sqrt{w^T\Sigma_P w}\\
& b-w^T\mu_N=\pi(\alpha_N)\sqrt{w^T\Sigma_N w}
\end{split}
\end{align}
By eliminating $b$ from \eqref{MPMFe}, the two equality constraints become
\begin{align}\label{homocondition}
\pi(\alpha_N)\sqrt{w^T\Sigma_N w}+\pi(\alpha_P)\sqrt{w^T\Sigma_P w}=w^T \mu_{pn},
\end{align}
where $\mu_{pn}=\mu_P-\mu_N$. Note that \eqref{homocondition} is positively homogeneous in $w$.
Therefore we need to give an additional constraint on $w$.
Following \cite{Lanckriet2002} and \cite{Huang2004} we may set $w^T(\mu_P-\mu_N)=1$ without loss of generality, 
leading to the constrained SOCP subproblem.
Following \cite{Cousins2017}, we set $\|w\|=1$, leading to a constrained concave (or convex) optimization
problem which is easier to solve (see \eqref{convexRelax} below).
The problem \eqref{MPMFe} is then transformed into the problem
\begin{align}\label{MPMF}
\begin{split}
&\min_{\alpha_P,\alpha_N,w}\frac{(1-p)\alpha_N+\beta^2p\alpha_P}{1-\alpha_P}\\
\text{s.t.}\;\; &\|w\|=1\\
&\pi(\alpha_N)\sqrt{w^T\Sigma_N w}+\pi(\alpha_P)\sqrt{w^T\Sigma_P w}=w^T\mu_{pn} 
\end{split}
\end{align}
We apply the alternative descent method to solve the non-convex problem \eqref{MPMF}.
Details are presented in Algorithm \ref{alg-1}.
We initialize the classifier $w_1=({\mu_P-\mu_N})/{\|\mu_P-\mu_N\|}$, so $\|w_1\|=1$.
Assume that at the $t$-th round, we have obtained the classifier $w_t$ with $\|w_t\|=1.$
Let $A_t=\sqrt{w_t^T\Sigma_P w_t}$, $B_t=\sqrt{w_t^T\Sigma_N w_t}$ and $C_t=w_t^T(\mu_P-\mu_N)$.
Then we seek $\alpha_{P,t}$ and $\alpha_{N,t}$ minimizing $Q_F(\alpha_P,\alpha_N;w_t)$
with the fixed classifier $w_t$, that is,
\begin{align}\label{sub-1}
\begin{split}
\alpha_{P,t},\alpha_{N,t}= &\arg\min_{\alpha_P,\alpha_N}\frac{(1-p)\alpha_N+\beta^2p\alpha_P}{(1-\alpha_P)}\\
&\text{s.t.}\;\;\pi(\alpha_N)B_t+\pi(\alpha_P)A_t=C_t 
\end{split}
\end{align}
From the equality constraint in \eqref{sub-1} it follows that
\be\label{alphaN}
\alpha_N=\frac{{B_t}^2}{{B_t}^2+\left[C_t-\pi(\alpha_P)A_t\right]^2}.
\en
Substituting $\alpha_N$ into \eqref{sub-1} gives 
\begin{align}\label{solution-1}
\begin{split}
&\alpha_{P,t}\\
&=\arg\min_{\alpha_P}\frac{(1-p)B_t^2}{[{B_t}^2+(C_t-\pi(\alpha_P)A_t)^2](1-\alpha_P)}
+\frac{\beta^2 p}{\pi^2(\alpha_P)}
\end{split}
\end{align}
From the equality constraint in \eqref{sub-1} again we have that $C_t-\pi(\alpha_P)A_t\ge0$,
implying that $\alpha_P\ge{A_t^2}/({A_t^2+C_t^2})$.
Make use of a grid-based search method to find $\alpha_{P,t}\in[{A_t^2}/({A_t^2+C_t^2}),1)$.
The corresponding $\alpha_{N,t}$ can then be computed from \eqref{alphaN}.

We now update the classifier $w_t$. Note that the objective $Q_F(\alpha_{P,t},\alpha_{N,t};w)$ will not
decrease if both $\alpha_{P,t},\alpha_{N,t}$ are fixed. Therefore we fix $\alpha_{P,t}$ and update
both $\alpha_{N,t}$ and the classifier simultaneously, which gives a smaller misclassification
probability ($\alpha_{N,t}'<\alpha_{N,t}$).
This can be achieved by solving the problem \eqref{MPMF} at $\alpha_P=\alpha_{P,t}$.
Let $\tau_t=\pi(\alpha_{P,t})$. The problem \eqref{MPMF} can then be transformed into the problem
\begin{align*}
\begin{split}
&\alpha_{N,t}'=\arg\min_{\alpha_N, w}\alpha_N\\
\text{s.t.}\;\;\;&\|w\|=1 \\
&\pi(\alpha_N)\sqrt{w^T\Sigma_N w}+\tau_t\sqrt{w^T\Sigma_P w}=w^T\mu_{pn}.
\end{split}
\end{align*}
Note that $\pi(\alpha)$ is monotonically decreasing with $\alpha$, so the above optimization problem is 
equivalent to the fractional programming (FP) problem:
\begin{align}\label{sub-2}
\begin{split}
&\max_{w}\frac{w^T\mu_{pn}-\tau_t\sqrt{w^T\Sigma_P w}}{\sqrt{w^T\Sigma_N w}}\\
&\;\;\text{s.t.}\;\;\;\|w\|= 1.
\end{split}
\end{align}
Write $\lambda_{t}(w)=[w^T\mu_{pn}-\tau_t\sqrt{w^T\Sigma_P w}]/{\sqrt{w^T\Sigma_N w}}$.
It is not necessary to find an exact solution of \eqref{sub-2}. In fact, it is enough to find an inexact
solution $w_{t+1}$ such that $\lambda_{t}(w_{t+1})>\lambda_{t}(w_{t})$ and $\|w_{t+1}\|=1$.
The corresponding misclassification probability is then given as
$\alpha_{N,t}'={1}/[{1+\lambda_t^2(w_{t+1})}]$. Let $\eta_{t}=\lambda_{t}(w_{t})>0$ and let us
define $f:\R^p\to\R$ by
\ben
f_{t}(w):=w^T\mu_{pn}-\tau_t\sqrt{w^T\Sigma_P w}-\eta_{t}\sqrt{w^T\Sigma_N w}.
\enn
Then $f_t$ has the following properties.

\begin{lemma}\label{f}
1) $f_t(w)$ is a concave function; 2) The condition $f_t(w)>0$ is positively homogeneous in $w$;
3) $f_t(w_t)=0$; 4) If $\hat w$ satisfies that $f_t(\hat w)>0$, then we have $\lambda_{t}(\hat w)>\eta_t$.
\end{lemma}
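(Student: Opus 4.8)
The plan is to verify the four properties in turn, each of which follows directly from the algebraic structure of $f_t$ together with the positivity of $\tau_t$, $\eta_t$, and the covariance matrices. Throughout I would treat $\tau_t=\pi(\alpha_{P,t})$ and $\eta_t=\lambda_t(w_t)$ as fixed positive constants, and I would assume $\Sigma_P$ and $\Sigma_N$ are positive definite so that $w\mapsto\sqrt{w^T\Sigma_P w}=\|\Sigma_P^{1/2}w\|_2$ and $w\mapsto\sqrt{w^T\Sigma_N w}=\|\Sigma_N^{1/2}w\|_2$ are genuine norms.

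For property~1, I would observe that $w\mapsto w^T\mu_{pn}$ is linear, hence concave, while the two square-root terms are norms and therefore convex. Since $\tau_t>0$ and $\eta_t>0$, subtracting positive multiples of convex functions yields concave functions, and a sum of concave functions is concave, so $f_t$ is concave. For property~2, I would compute $f_t(sw)=s\,f_t(w)$ for any $s>0$ using the identity $\sqrt{(sw)^T\Sigma(sw)}=s\sqrt{w^T\Sigma w}$; positive homogeneity of degree one then makes $f_t(w)>0$ equivalent to $f_t(sw)>0$ for every $s>0$, which is exactly the claimed homogeneity of the condition.

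For property~3, I would substitute the definition $\eta_t=[w_t^T\mu_{pn}-\tau_t\sqrt{w_t^T\Sigma_P w_t}]/\sqrt{w_t^T\Sigma_N w_t}$ into the expression for $f_t(w_t)$; the third term then cancels the first two by construction, giving $f_t(w_t)=0$. For property~4, starting from $f_t(\hat w)>0$ I would rearrange to $\hat w^T\mu_{pn}-\tau_t\sqrt{\hat w^T\Sigma_P\hat w}>\eta_t\sqrt{\hat w^T\Sigma_N\hat w}$ and divide both sides by the positive quantity $\sqrt{\hat w^T\Sigma_N\hat w}$, whose left-hand side is precisely $\lambda_t(\hat w)$, yielding $\lambda_t(\hat w)>\eta_t$.

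Each step is elementary, so I do not expect a single hard obstacle; the only points demanding care are the positivity assumptions. Concavity in property~1 and the division in property~4 both rely on $\Sigma_N$ (and $\Sigma_P$) being positive definite, so that the square-root terms are well-defined norms and $\sqrt{\hat w^T\Sigma_N\hat w}>0$ for every $\hat w\neq0$. I would flag this explicitly at the outset, since the alternating descent scheme in Algorithm~\ref{alg-1} invokes property~4 to guarantee the strict improvement $\lambda_t(w_{t+1})>\eta_t=\lambda_t(w_t)$, and that guarantee would fail at degenerate $w$ lying in the null space of $\Sigma_N$.
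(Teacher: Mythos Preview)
Your proposal is correct and follows essentially the same route as the paper: the paper also argues concavity via the convexity of $\|\Sigma_P^{1/2}w\|_2$ and $\|\Sigma_N^{1/2}w\|_2$, establishes positive homogeneity from $f_t(sw)=s f_t(w)$, reads off $f_t(w_t)=0$ from the definition of $\eta_t$, and obtains $\lambda_t(\hat w)>\eta_t$ by rearranging $f_t(\hat w)>0$ and dividing by $\sqrt{\hat w^T\Sigma_N\hat w}$. Your explicit flagging of the positive-definiteness assumption on $\Sigma_N$ (needed for the division in property~4) is a useful caveat that the paper leaves implicit.
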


\begin{proof}
1) Since $f_{t}(w)=w^T\mu_{pn}-\tau_t\|\Sigma_P^{1/2}w\|_2-\eta_{t}\|\Sigma_N^{1/2}w\|_2$
and $\|\Sigma_P^{1/2}w\|_2,\|\Sigma_N^{1/2}w\|_2$ are convex function,
then it follows that $f(w)$ is concave.

2) Suppose there is a $\hat w\in\R^p$ such that $f_t(\hat w)>0$.
Then, for any $s>0$ we have $f_t(s\hat w)=sf_t(\hat w)>0$, that is,
the condition $f_t(w)>0$ is positively homogeneous in $w$.

3) Since $\eta_{t}=\lambda_{t}(w_{t})$, it is easy to see that $f_t(w_t)=0$.

4) The condition $f_t(\hat w)>0$ implies that
$\hat w^T\mu_{pn}-\tau_t\sqrt{\hat w^T\Sigma_P\hat w}-\eta_{t}\sqrt{\hat w^T\Sigma_N\hat w}>0$.
Then, by the definition of $\lambda_t(w)$ and $\eta_t$ we have $\lambda_{t}(\hat w)>\eta_t$.
\end{proof}

We now use Lemma \ref{f} to find an inexact solution of \eqref{sub-2}.
If $\nabla f_t(w_t)= 0$, then set $w_{t+1} = w_t$. The algorithm terminates.
Otherwise, $f_t(w)$ does not attain its maximum at $w_t$. Hence there is a $w_{t+1}$ such that
$f_t(w_{t+1})>0$. Choose $w_{t+1}$ as the solution of the optimization problem:
\be\label{convexRelax}
\max_{w} f_t(w)\qquad \text{s.t.}\;\;\;\|w\|\le 1.
\en
We may use a gradient projection method to solve the problem \eqref{convexRelax}.
Take the initial value $v_1=w_{t}$ and the step size $\gamma_k= 1/k,\;k=1,2,\cdots$.
At the $k$-th sub-step, $v_k$ is updated as follows:
\ben
u_k=v_k+\gamma_k\nabla f_{t}(v_k),\;\;\; v_k=\frac{u_k}{\|u_k\|},
\enn
where
\ben
\nabla f_{t}(v_k)=(\mu_P-\mu_N)-\frac{\tau_t\Sigma_P v_k}{\sqrt{v_k^T\Sigma_P v_k}}
-\frac{\eta_t\Sigma_N v_k}{\sqrt{v_k^T\Sigma_N v_k}}.
\enn
The algorithm terminates at the $k$-th step if $f_t(v_k)$ and $f_t(v_{k+1})$ satisfy the conditions
\begin{align}\label{terminated-condition}
\begin{split}
&f_t(v_{k+1}) \ge 0,\;\;\;f_t(v_k) \ge 0,\\
&\|\nabla f_t(v_{k+1})\| \le 0.0001.
\end{split}
\end{align}
Then we update the classifier with $w_{t+1}=v_k$. After running $T$-rounds, $b$ is given as
\be\label{b}
b_T=w_T^T\mu_P-\pi(\alpha_{P,T})\sqrt{w_T^T\Sigma_P w_T}.
\en

\begin{theorem}\label{the:covmpmf}
After running Algorithm \ref{alg-1}, the objective value $Q(\alpha_{P,t},\alpha_{N,t};w_t)$ converges.
\end{theorem}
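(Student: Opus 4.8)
The plan is to prove that the sequence of objective values $Q_t := Q_F(\alpha_{P,t},\alpha_{N,t};w_t)$ is monotonically non-increasing and bounded below, so that convergence follows immediately from the monotone convergence theorem for real sequences. The non-increasing property should be extracted by decomposing each round of Algorithm~\ref{alg-1} into its two updates---the inner minimization over $(\alpha_P,\alpha_N)$ and the classifier update---and checking that neither one increases the objective.

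First I would record the defining property of the inner step \eqref{sub-1}: with $w_t$ fixed, the pair $(\alpha_{P,t},\alpha_{N,t})$ minimizes $Q_F(\cdot,\cdot;w_t)$ over all $(\alpha_P,\alpha_N)$ satisfying the equality constraint $\pi(\alpha_N)B_t+\pi(\alpha_P)A_t=C_t$. Consequently any feasible pair for this subproblem yields a value no smaller than $Q_t$; I will invoke this at round $t+1$. Next I would analyze the classifier update. With $\tau_t=\pi(\alpha_{P,t})$ held fixed, the algorithm returns $w_{t+1}$ with $\|w_{t+1}\|=1$ and $\lambda_t(w_{t+1})>\lambda_t(w_t)$. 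Using \eqref{alphaN} together with $\lambda_t(w_t)=(C_t-\pi(\alpha_{P,t})A_t)/B_t$, one verifies the identity $\alpha_{N,t}=1/[1+\lambda_t^2(w_t)]$, so the updated value $\alpha_{N,t}'=1/[1+\lambda_t^2(w_{t+1})]$ satisfies $\alpha_{N,t}'<\alpha_{N,t}$. Because $Q_F=[(1-p)\alpha_N+\beta^2 p\alpha_P]/(1-\alpha_P)$ has partial derivative $(1-p)/(1-\alpha_P)>0$ in $\alpha_N$, it is strictly increasing in $\alpha_N$; holding $\alpha_P=\alpha_{P,t}$ fixed and lowering $\alpha_N$ therefore decreases the objective, giving $Q_F(\alpha_{P,t},\alpha_{N,t}';w_{t+1})\le Q_t$.

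I would then chain the two steps. The key point is that $(\alpha_{P,t},\alpha_{N,t}',w_{t+1})$ is again \emph{feasible} for the constraint in \eqref{MPMF}: by construction $\alpha_{N,t}'=1/[1+\lambda_t^2(w_{t+1})]$ is precisely the value of $\alpha_N$ that makes $\pi(\alpha_N)\sqrt{w_{t+1}^T\Sigma_N w_{t+1}}+\tau_t\sqrt{w_{t+1}^T\Sigma_P w_{t+1}}=w_{t+1}^T\mu_{pn}$ hold. Hence $(\alpha_{P,t},\alpha_{N,t}')$ is an admissible competitor in the inner minimization \eqref{sub-1} at round $t+1$, and minimality gives $Q_{t+1}\le Q_F(\alpha_{P,t},\alpha_{N,t}';w_{t+1})\le Q_t$. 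Thus $\{Q_t\}$ is non-increasing. Since every admissible $\alpha_P,\alpha_N$ lie in $(0,1)$, each term satisfies $Q_F\ge 0$, so the sequence is bounded below by $0$, and convergence follows. (In the degenerate case $\nabla f_t(w_t)=0$ the algorithm sets $w_{t+1}=w_t$ and stops, so the sequence is eventually constant and the claim is immediate.)

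I expect the main obstacle to be the feasibility bookkeeping rather than any hard estimate: one must be careful that the $\alpha_N'$ produced by the classifier update is tied to the new constraint with $w_{t+1}$ (not merely to an arbitrary decrease), for only then is it a legitimate starting point for the next round's inner minimization and only then does the telescoping inequality close. The monotonicity of $Q_F$ in $\alpha_N$ and the identity $\alpha_{N,t}=1/[1+\lambda_t^2(w_t)]$ are routine once the notation is unwound, so the essential work lies in linking the inexact $w$-update (which guarantees only $\lambda_t(w_{t+1})>\lambda_t(w_t)$ and $\|w_{t+1}\|=1$) back to the feasible set of \eqref{MPMF}.
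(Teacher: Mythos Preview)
Your proposal is correct and follows essentially the same alternating-descent argument as the paper: both show $Q_F(\alpha_{P,t},\alpha_{N,t};w_t)\ge Q_F(\alpha_{P,t},\alpha_{N,t}';w_{t+1})\ge Q_F(\alpha_{P,t+1},\alpha_{N,t+1};w_{t+1})$ and conclude by monotonicity. Your version is in fact more careful than the paper's, since you explicitly verify the feasibility of $(\alpha_{P,t},\alpha_{N,t}')$ for the constraint at $w_{t+1}$, check the monotonicity of $Q_F$ in $\alpha_N$, and supply the lower bound $Q_F\ge 0$ needed to pass from monotone to convergent---points the paper leaves implicit.
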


\begin{proof}
Note first that Algorithm \ref{alg-1} uses the alternative descent method to solve the MPMF problem \eqref{MPMF}.
At the $t$-th round, we obtained the misclassification probability $\alpha_{P,t},\alpha_{N,t}$
by minimizing $Q_F(\alpha_{P},\alpha_{N};w_t)$. We then fix $\alpha_{P,t}$ and seek a classifier $w_{t+1}$
which makes $\alpha_{N,t}'$ smaller. By the definition of $Q_F$ we have
$Q_F(\alpha_{P,t},\alpha_{N,t};w_t)>Q_F(\alpha_{P,t},\alpha_{N,t}'; w_{t+1})$.
Repeat this process. Then, at the $(t+1)$-th round, update
$(\alpha_{P,t+1},\alpha_{N,t+1})=\arg\min Q_F(\alpha_{P},\alpha_{N};w_{t+1})$.
Thus, $Q_F(\alpha_{P,t},\alpha_{N,t};w_t)>Q_F(\alpha_{P,t+1},\alpha_{N,t+1};w_{t+1})$.
This means that the objective $Q_F(\alpha_{P,t},\alpha_{N,t};w_t)$ is monotonically decreasing with $t$,
implying that $Q_F(\alpha_{P,t},\alpha_{N,t};w_t)$ converges as $t\to\infty$.
The proof is thus complete.
\end{proof}

Suppose $(\alpha_P,\alpha_N)$ converges to $(\alpha_P^\star,\alpha_N^\star)$. Then the classifier $w$ also 
converges to the optimal solution of \eqref{convexRelax} in which $\tau_t = \pi(\alpha_P^\star)$ and 
$\eta_t=\pi(\alpha_N^\star)$.
In the experiments, we will see that both $\alpha_P,\alpha_N$ are convergent for several real datasets 
(see Fig.\ref{fig:alpha}).

\begin{algorithm}[tb]
\caption{MPM for the $F_\beta$-measure (MPMF)}\label{alg-1}
\begin{algorithmic}[1]
   \STATE {\bfseries Input:} Mean and covariance matrix of positive and negative samples are 
          $(\mu_P,\Sigma_P)$ and $(\mu_N,\Sigma_N)$, respectively. 
          Initial classifier $w_1=({\mu_P-\mu_N})/{\|\mu_P-\mu_N\|}$.
   \FOR{$t=1$ {\bfseries to}}
   \STATE Given $w_t$, find $\alpha_{P,t}$ by solving problem \eqref{solution-1} with a grid-based search method. 
   The corresponding $\alpha_{N,t}$ is calculated from \eqref{alphaN}.
   \STATE Calculate the objective function \\ $Q = Q_F(\alpha_{P,t},\alpha_{N,t};w_{t})$.
   \STATE Computer $\tau_t=\pi(\alpha_{P,t})$ and $\eta_t=\lambda_t(w_t)$.
   \STATE Set $v_1=w_t$ and solve \eqref{convexRelax}.
   \FOR   {$k =1,2,\cdots,$}
   \STATE $u_k=v_k+\gamma_k\nabla f_{t}(v_k)$
   \STATE $v_{k+1}=\frac{u_k}{\|u_k\|}$
   \STATE Terminate if the conditions \eqref{terminated-condition} are satisfied.
   \ENDFOR
   \STATE Set $w_{t+1} = v_{k+1}$ and the corresponding $\alpha_{N,t}'$ is given as $\alpha_{N,t}' = {1}/[{1+\lambda_t^2(w_{t+1})}]$.
   \STATE Calculate the objective function \\ $Q' = Q_F(\alpha_{P,t},\alpha_{N,t}';w_{t+1})$.
   \STATE If $Q-Q'\le 0.0001$, terminate.
   \ENDFOR
   \STATE Compute $b$ by using \eqref{b}.
\end{algorithmic}
\end{algorithm}

We have the following remarks on our Algorithm \ref{alg-1}.

1) For imbalanced data, the mean of the positive and negative samples is different, 
that is, $\mu_P\neq\mu_N$. Thus, when $\|\mu_P-\mu_N\|$ becomes larger, $\alpha_N$ and $\alpha_P$ 
will get smaller.

2) Given $w_t$, we have $C_t\ge\pi(\alpha_P)A_t$. Then
\ben
\alpha_P\ge\frac{1}{1+\left(\frac{C_t}{A_t}\right)^2}\ge\frac{1}{1+\frac{d^2}{w_t^T \Sigma_P w_t}}
\ge\frac{1}{1+\frac{d^2}{\lambda_{min}^{P}}},
\enn
where $\lambda_{min}^{P}$ is the minimal eigenvalue of $\Sigma_P$.
Similarly, 
\ben
\alpha_N\ge\frac{1}{1+\left(\frac{C_t}{B_t}\right)^2}\ge\frac{1}{1+\frac{d^2}{w_t^T \Sigma_N w_t}}
\ge\frac{1}{1+\frac{d^2}{\lambda_{min}^{N}}},
\enn
where $\lambda_{min}^{N}$ is the minimal eigenvalue of $\Sigma_N$.

3) We have the implicit constraint $w^T\mu_{pn}>0$ for the classifier $w$. This is also satisfied by
the initial classifier $w_1=({\mu_P-\mu_N})/{\|\mu_P-\mu_N\|}$.

4) Suppose $\mu_P,\mu_N,\Sigma_P,\Sigma_N$ are fixed. Then the MPMF model depends on
$\tau(\beta,p)={\beta^2p}/({1-p})$.

5) Our MPMF algorithm for $F_\beta$ can be naturally extended to other non-decomposable performance
measures listed in Table \ref{NDMeasures}. In fact, by replacing the objective function $Q_{F}$ with
the general objective function $Q_{nd}(\alpha_P,\alpha_N)$ in the problem \eqref{MPMFmin},
we have an MPM model with a non-decomposable performance measure (MPMND), leading to
the following optimization problem similar to \eqref{MPMF}:
\begin{align}\label{MPMND}
\begin{split}
&\min_{\alpha_P,\alpha_N, w} \sum_{i,j=0}^ \infty \tau_{ij} \alpha_N^i \alpha_P^j \\
&\;\text{s.t.}\;\;\;\|w\|=1  \\
& \pi(\alpha_N)\sqrt{w^T\Sigma_N w}+\pi(\alpha_P)\sqrt{w^T\Sigma_P w}= w^T\mu_{pn}.
\end{split}
\end{align}
This optimization problem can also be solved by the alternative descent method, leading to an algorithm
similar to Algorithm \ref{alg-1}. The only difference is step 3, where the problem \eqref{solution-1}
is replaced with the problem
\ben
\alpha_{P,t}=\arg\min Q_{nd}\left(\alpha_P,\frac{{B_t}^2}{{B_t}^2+[C_t-\pi(\alpha_P)A_t]^2};w_t\right)
\enn

\begin{table}
\caption{$F_\beta$ measure under different combinations of $\beta$ and the proportion $p$ of
positive samples. In this experiment, the prior information is explicitly given.
}\label{Experiment-1}
\begin{center}
\begin{small}
\begin{sc}
\begin{tabular}{l|ll|ll|}
\toprule
\multirow{2}{*}{p}  & \multicolumn{2}{c|}{$\beta =1 $} & \multicolumn{2}{c|}{$\beta =3$}\\  \cline{2-5}
 & $\alpha_P(\mbox{FNR})$ & $\alpha_N(\mbox{FPR})$ & $\alpha_P(\mbox{FNR})$ & $\alpha_N(\mbox{FPR})$ \\
\midrule
0.5   & 0.1646 & 0.1995 & 0.0846 & 0.5447  \\
0.4   & 0.1847 & 0.1762 & 0.0946 & 0.4436  \\
0.3   & 0.2047 & 0.1592 & 0.1146 & 0.3224  \\
0.2   & 0.2347 & 0.1406 & 0.1246 & 0.2847  \\
0.1   & 0.2847 & 0.1203 & 0.1646 & 0.1995  \\
0.05  & 0.3247 & 0.1093 & 0.1947 & 0.1671  \\
0.01  & 0.3747 & 0.0992 & 0.2947 & 0.1172  \\
\bottomrule
\end{tabular}
\end{sc}
\end{small}
\end{center}
\end{table}

\begin{table}
\caption{Details of the training imbalanced datasets with $p$ the proportion of the positive samples.
}\label{dataset}
\begin{center}
\begin{small}
\begin{sc}
\begin{tabular}{lllll}
\toprule
dataset    & training   & testing   & feature &  p  \\
\midrule
letter     & 15000      & 5000      & 16      & 0.0384 \\
breast     & 462        & 219       & 10      & 0.3499 \\
segment    & 1299       & 1009      & 19      & 0.1428 \\
usps       & 7291       & 2007      & 256     & 0.1000 \\
ijcnn      & 49990      & 91701     & 22      & 0.0957 \\
covtype    & 500000     & 81012     & 54      & 0.3646 \\
skin       & 200000     & 45057     & 3       & 0.2075 \\
sensorless & 39999      & 18508     & 48      & 0.0909 \\
\bottomrule
\end{tabular}
\end{sc}
\end{small}
\end{center}
\vskip -0.1in
\end{table}

\section{Kernelization}\label{sec5}

Due to the non-decomposability of the $F_\beta$-measure, most algorithms for the $F_\beta$-measure maximization 
focus on linear classifiers which may not be always effective for some real-world problems.
In this paper, we apply the kernel trick to the minimax probability machine for the $F_\beta$-measure
to derive the Kernel Minimax Probability Machine for $F_\beta$ (KMPMF) which yields a nonlinear classifier. 

By \cite{Lanckriet2002}, the kernel trick works in the MPM model.
Let $K:\R^d\times\R^d\to\R$ be a positive kernel function.
Given the positive samples $\{x_P^i\}_{i=1}^{N_P}$ and the negative samples $\{x_N^i\}_{i=1}^{N_N}$,
define the Gram matrix $\bf{K}=(\bf{K}_{ij})$ with
\begin{align*}
\bf{K}_{ij} =
\begin{cases}
K(x_P^i,x_P^j)\;\;\text{if}\; i\le N_P,\;j\le N_P,\\
K(x_P^i,x_N^{j-N_P})\;\;\text{if}\; i\le N_P,\;j>N_P,\\
K(x_N^{i-N_P},x_P^j)\;\; \text{if}\; i>N_P,\;j\le N_P,\\
K(x_N^{i-N_P},x_N^{j-N_P})\;\;\text{if}\; i>N_P,\;j>N_P.\\
\end{cases}
\end{align*}
The first $N_P$ rows and the last $N_N$ rows of $\bf K$ are named the positive Gram matrix $\bf K_P$ and
the negative Gram matrix $\bf K_N$, respectively.
Denote by $\bf I_P$ and $\bf I_N$ their corresponding column average, which are both $N_P+N_N$
dimensional vectors. Define
\ben
{\bf L_P}=\frac{1}{\sqrt{N_P}}({\bf K_P-l_P1_{P}^T}),\;\;\;
{\bf L_N }=\frac{1}{\sqrt{N_N}}(\bf{K_N-l_N1_{N}^T}),
\enn
where $\bf 1_{P}$ and $\bf 1_{N}$ are column vectors of ones of dimension $N_P$ and $N_N$, respectively.
We have the following theorem which can be shown similarly as in the proof of Theorem 6 in \cite{Lanckriet2002}.

\begin{theorem}\label{thm2}
If $\bf l_P=\bf l_N$, then the minimal probability decision problem has no solution. Otherwise, 
the optimal decision boundary is determined by the solution $(\alpha_P^\star,\alpha_N^\star,w^\star)$
of the optimization problem
\begin{align*}
\begin{split}
&\min_{\alpha_P,\alpha_N,w}\frac{(1-p)(\alpha_N)+\beta^2p(\alpha_P)}{1-\alpha_P}\\
&\text{s.t.}\;\;\|w\|=1\\
&\pi(\alpha_P)\sqrt{w^T{\bf L_P^T L_P}w}+\pi(\alpha_N)\sqrt{w^T{\bf L_N^T L_N}w}=w^T({\bf l_P-l_N})
\end{split}
\end{align*}
and $b^\star=(w^\star)^T{\bf l_P}-\pi(\alpha_P^\star)\sqrt{(w^\star)^T{\bf L_P^T L_P}w^\star}$.
\end{theorem}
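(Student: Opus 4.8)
The plan is to reduce the feature-space version of the MPMF problem \eqref{MPMF} to a finite-dimensional problem in the kernel coefficients, mirroring the kernelization of the accuracy-rate MPM in \cite{Lanckriet2002}. Let $\phi$ be a feature map with $K(x,x')=\phi(x)^T\phi(x')$, so that the nonlinear classifier takes the form $f(x)=w^T\phi(x)-b$ with $w$ in the (possibly infinite-dimensional) feature space. The empirical positive mean and covariance in feature space are $\mu_P^\phi=\frac{1}{N_P}\sum_i\phi(x_P^i)$ and $\Sigma_P^\phi=\frac{1}{N_P}\sum_i(\phi(x_P^i)-\mu_P^\phi)(\phi(x_P^i)-\mu_P^\phi)^T$, with the analogous definitions for the negative class. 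Since Lemma \ref{Lanckriet} removes the distributional assumption using only the mean and covariance, the feature-space MPMF problem is exactly \eqref{MPMF} with $(\mu_P,\Sigma_P,\mu_N,\Sigma_N)$ replaced by their feature-space counterparts.

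First I would argue that the optimal $w$ lies in the span of the mapped training points $\{\phi(x_P^i)\}\cup\{\phi(x_N^j)\}$. Writing $w=w_\parallel+w_\perp$ with $w_\parallel$ the orthogonal projection onto this span, the means $\mu_P^\phi,\mu_N^\phi$ and every inner product $w^T\phi(\cdot)$ appearing in the covariance terms are unchanged when $w_\perp$ is dropped; hence both the objective and the equality constraint of \eqref{MPMF} depend on $w$ only through $w_\parallel$. Because the constraint is positively homogeneous in $w$ and the objective is scale-free in $w$, the normalization $\|w\|=1$ merely fixes the remaining scale degree of freedom, so we may take $w_\perp=0$ without loss of generality. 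This is precisely the reduction underlying Theorem 6 in \cite{Lanckriet2002}, and it is the step I expect to require the most care, since one must verify that discarding $w_\perp$ alters neither feasibility nor the value of $Q_F$, and that the Euclidean normalization of the coefficient vector is as legitimate a scale fix as the feature-space norm.

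Having reduced to $w=\sum_k\gamma_k\phi(z_k)$ (with $z_k$ enumerating all training points and $\gamma$ renamed $w$), I would express each quantity through the Gram matrix: $w^T\mu_P^\phi=w^T\mathbf{l}_P$, $w^T\mu_N^\phi=w^T\mathbf{l}_N$, and, using $w^T\phi(x_P^i)=(\mathbf{K}_P w)_i$ together with $\mathbf{l}_P=\frac{1}{N_P}\mathbf{K}_P^T\mathbf{1}_P$,
$w^T\Sigma_P^\phi w=\frac{1}{N_P}\|(\mathbf{K}_P-\mathbf{1}_P\mathbf{l}_P^T)w\|_2^2=w^T\mathbf{L}_P^T\mathbf{L}_P w$,
and likewise $w^T\Sigma_N^\phi w=w^T\mathbf{L}_N^T\mathbf{L}_N w$. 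Substituting these into \eqref{MPMF} yields precisely the stated optimization problem, and substituting them into the formula \eqref{b} for $b$ yields the stated expression $b^\star=(w^\star)^T\mathbf{l}_P-\pi(\alpha_P^\star)\sqrt{(w^\star)^T\mathbf{L}_P^T\mathbf{L}_P w^\star}$.

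Finally I would dispose of the degenerate case. If $\mathbf{l}_P=\mathbf{l}_N$ then $w^T(\mathbf{l}_P-\mathbf{l}_N)=0$ for every $w$, so the equality constraint forces $\pi(\alpha_P)\sqrt{w^T\mathbf{L}_P^T\mathbf{L}_P w}+\pi(\alpha_N)\sqrt{w^T\mathbf{L}_N^T\mathbf{L}_N w}=0$; since $\pi(\alpha)>0$ for $0<\alpha<1$ and both radicands are nonnegative, this contradicts the implicit constraint $w^T\mu_{pn}>0$ inherited from the linear model, so no feasible, and hence no optimal, solution exists. Otherwise $\mathbf{l}_P\neq\mathbf{l}_N$ guarantees feasibility, and the computation above identifies the optimal decision boundary with the solution $(\alpha_P^\star,\alpha_N^\star,w^\star)$ of the displayed problem, which completes the proof.
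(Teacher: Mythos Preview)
Your proposal is correct and follows exactly the route the paper indicates: the paper does not spell out a proof but states that the result ``can be shown similarly as in the proof of Theorem~6 in \cite{Lanckriet2002},'' and your argument---restricting $w$ to the span of the mapped training points, rewriting the feature-space mean and covariance quadratic forms via $\mathbf{l}_P,\mathbf{l}_N,\mathbf{L}_P,\mathbf{L}_N$, and then reading off the kernelized version of \eqref{MPMF} and \eqref{b}---is precisely that derivation adapted to the $F_\beta$ objective. Your treatment of the degenerate case $\mathbf{l}_P=\mathbf{l}_N$ and your remark about the harmless change of normalization are also in line with the original MPM kernelization.
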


By Theorem \ref{thm2}, a new data point $x$ is predicted by
$\hat y=\mbox{sign}\left[\sum\limits_{i=1}^{N_P}w^\star_{i}K(x_P^i,x)
+\sum\limits_{j=N_P+1}^{N_P+N_N}w^\star_{j}K(x_N^{j-N_P},x)-b^\star\right]$.

\section{Experiments}\label{sec6}

In this section, we present experiments to verify our MPMF model for the $F_\beta$ measure.

We first consider the case when the mean and covariance matrix of the data are given in advance.
Let $\mu_P=(3,1)^T,\mu_N=(-1,-2)^T$,
$\Sigma_P=\begin{bmatrix}
1& 1/2 \\
1/2 & 1
\end{bmatrix}$ and $\Sigma_N =\begin{bmatrix}
1& 1/3 \\
1/3 & 1
\end{bmatrix}$.
Then we calculate the maximal $\alpha_P$ (FNR) and $\alpha_N$ (FPR) with different combinations of $\beta$
and the proportion $p$ of the positive samples. Set $p=0.5,0.4,0.3,0.2,0.1,0.05,0.01$ and $\beta=1,3$.
Table \ref{Experiment-1} presents the FPR and FNR values in the worst-case.
Given the proportion $p$ of the positive samples, a larger $\beta$ helps reduce FNR in the cost of FPR, 
which is very important in imbalanced classification problems.

In the second experiment, we apply our method MPMND to some real-world datasets,
which can be downloaded from LIBSVM website \footnote{https://www.csie.ntu.edu.tw/\textasciitilde cjlin/libsvm/}
and UCI machine learning repository \footnote{http://archive.ics.uci.edu/ml/datasets.php}.
Table \ref{dataset} presents details of these datasets.
We also define $p$ as the proportion of the positive samples, which is the same as in the training and test datasets.
In the case of multi-class datasets, we report results (using one-vs-all classifiers) averaged over the classes.
For each dataset, the reported values of the non-decomposable measures are obtained by averaging over $20$ independent runs.

Let $\{x_P^i\}_{i=1}^{N_P}$ and $\{x_N^i\}_{i=1}^{N_N}$ be the training data points with positive and
negative class labels, respectively. Then the mean and covariance matrix of the dataset can be estimated as
\ben
\hat\mu_P=\frac{\sum_{i=1}^{N_P} x_P^i}{N_P},\;
\hat\Sigma_P=\frac{\sum_{i=1}^{N_P}(x_P^i-\hat\mu_P)(x_P^i-\hat\mu_P)^T}{N_P-1},\\
\hat\mu_N=\frac{\sum_{i=1}^{N_N} x_N^i}{N_N},\;
\hat\Sigma_N=\frac{\sum_{i=1}^{N_N}(x_N^i-\hat\mu_N)(x_N^i-\hat \mu_N)^T}{N_N-1}.
\enn
These estimators will be used to replace the true statistics in the MPMND model.

The plug-in method, which first learns a classification probability function by training a logistical regression 
model on a training dataset and then decides a threshold based on another dataset, is consistent with several 
non-decomposable performance measures.
Table \ref{tab:time} gives the running time of MPMND and the plug-in method on real datasets for the AR, AM, 
and F measures.
Note that the plug-in method learns the same logistical model on the training dataset for all performance 
measures, so we average its running time.
The results show that our MPMND method is faster than the plug-in method.
Table \ref{tab:measure} presents the corresponding reported performance measures which 
show that our method achieves a comparable performance with the state-of-the-art plug-in method 
(see Table \ref{tab:measure}).
In the minimax approach, the constant $b$ can be directly calculated from \eqref{b}.
However, in practice, it is reasonable to adjust the value of $b$ with a validated dataset 
to obtain a better result.

We consider the F1-measure metric. 
Fig. \ref{fig:Q} presents the reported $Q_F$ value for several different imbalanced datasets.
The horizontal axis represents the number of iterations. From Fig. \ref{fig:Q} it is seen that 
the objective $Q_F$ converges quickly, which is consistent with Theorem \ref{the:covmpmf}.
Fig \ref{fig:alpha} shows the reported value of $\alpha_P$ and $\alpha_N$ as a function of iterations.
It is shown that both $\alpha_P$ and $\alpha_N$ also converge.
After fixing $\alpha_P$, we update the classifier by solving a concave optimization problem \eqref{convexRelax}.
As $\alpha_P$ converges, the classifier is also convergent.

We now compare our MPMF method with several state-of-the-art imbalanced learning algorithms,
including the over-sampling methods: Random Over-Sampling (ROS) \cite{He2009}, 
Synthetic Minority Over-Sampling TEchnique (SMOTE) \cite{Chawla2002},
ADAptive SYNthetic sampling (ADASYN) \cite{He2008}), the under-sampling methods: 
Random Under-Sampling (RUS) \cite{He2009}, Cluster Centroids (CC) \cite{Lemaitre2017}, 
and the ensemble learning methods: Balanced Random Forest Classifier (BRF) \cite{Chen2004} 
and RUSBoost \cite{Seiffert2009}. 
These compared algorithms are implemented with the help of an open-source toolbox Imbalanced-learn \cite{Lemaitre2017}
and the package Scikit-learn \cite{Pedregosa2011}.
For the resampling methods, we train a linear SVM on the new datasets.
Noting that the ensemble methods learn a nonlinear classifier, we also train a KMPMF model.
For computational effectiveness, we randomly choose $200$ positive samples and $200$ negative samples 
as the kernel support vectors, respectively (for the dataset Breast, we choose $100$ positive samples 
and $100$ negative samples). Table \ref{tab:running time-2} gives the running time (in seconds) of 
the algorithms used for imbalanced classification problems.
In most cases, MPMF is faster than the compared imbalanced learning algorithms, especially for large-scale datasets.
This means that this new method MPMF scales well to deal with large-scale imbalanced claffication problems.
Table \ref{tab:measure-2} presents the obtained values of the F1-measure.
From the experimental results, it can be seen that our linear MPMF model has a better performance than 
the compared resampling methods and our nonlinear model KMPMF achieves a comparable result for the F1-measure 
with the compared ensemble methods.

\begin{table*}
\caption{The running time (in seconds) of the MPMND method with different metrics and the plug-in method on real datasets. }
\label{tab:time}
\begin{center}
\begin{small}
\begin{sc}
\resizebox{\textwidth}{!}{
\begin{tabular}{llllllllll}
\toprule
dataset     & AR & AM & QM & HM & GM & GTP & JAC/F1 & F2 & Plug-in\\
\midrule
letter       & 0.0222 & 0.1393 & 0.1578 & 0.1952 & 0.2008 & {\bf 0.2702} & 0.1878 & 0.1923 & 0.1419  \\
breast       & 0.0254 & 0.0341 & 0.0290 & 0.0614 & 0.0676 & 0.0980 & 0.0550 & {\bf 0.0981} & 0.0021  \\
segment      & 0.1081 & 0.2321 & 0.2178 & 0.3425 & 0.3489 & {\bf 0.3576} & 0.2984 & 0.3236 & 0.0280  \\
usps         & 0.8704 & 1.3123 & 1.2332 & 1.4355 & 1.4179 & 1.5656 & 1.4405 & 1.4754 & {\bf 24.088}  \\
covtype      & 0.0225 & 0.4994 & 0.2762 & 0.7239 & 1.1183 & 0.4069 & 0.3504 & 0.3756 & {\bf 9.4688}  \\
ijcnn        & 0.0121 & 0.0770 & 0.0541 & 0.0810 & 0.0851 & 0.1186 & 0.1082 & 0.0866 & {\bf 0.3404} \\
skin         & 0.2357 & 0.1867 & 0.1489 & 0.2044 & 0.2330 & 0.3842 & 0.2455 & 0.2717 & {\bf 0.6936}  \\
sensorless   & 0.0668 & 0.4258 & 0.4552 & 1.0971 & 0.9714 & 0.5558 & 0.5073 & 0.5847 & {\bf 7.9259}  \\
\bottomrule
\end{tabular}
}
\end{sc}
\end{small}
\end{center}
\end{table*}

\begin{table*}
\caption{Comparison of the MPMND and Plug-in methods for different performance measure metrics. 
The values presented correspond to the metrics evaluated on the test datasets (a higher value means 
a better performance). For multi-class datasets, the averaged performance over classes is presented.}
\label{tab:measure}
\centering
\subtable[Reported values of the AR, AM, QM, HM and GM metrics obtained with the MPMND and Plug-In methods.]{
\resizebox{\textwidth}{!}{
\begin{tabular}{l|ll|ll|ll|ll|ll|}
\toprule
\multirow{2}{*}{Dataset}  & \multicolumn{2}{c|}{AR} & \multicolumn{2}{c|}{AM} & \multicolumn{2}{c|}{QM} & \multicolumn{2}{c|}{HM} & \multicolumn{2}{c|}{GM} \\  
\cline{2-11}
 & MPMND & PLUG-IN & MPMND & PLUG-IN & MPMND & PLUG-IN & MPMND & PLUG-IN & MPMND & PLUG-IN \\
\midrule
letter & 0.9705 & {\bf 0.9750} &{\bf 0.8994} & 0.8845 & {\bf 0.9878} & 0.9830 & {\bf 0.9007} & 0.8829 & {\bf 0.9025} & 0.8838\\
breast & {\bf 0.9781} & 0.9639 &{\bf 0.9813} & 0.9664 & {\bf 0.9996} & 0.9985 & {\bf 0.9798} & 0.9663 & {\bf 0.9820} & 0.9663\\
segment & 0.9570 &{\bf 0.9684} & 0.9461 &{\bf 0.9532} & 0.9902 & {\bf 0.9939} & 0.9438 & {\bf 0.9528} & 0.9443 & {\bf 0.9526}\\
usps & {\bf 0.9701} & 0.9699 & {\bf 0.9452} & 0.9277 & {\bf 0.9962} & 0.9911 & {\bf 0.9449} & 0.9242 & {\bf 0.9455} & 0.9260\\
covtype & 0.7404 &{\bf 0.7722} & 0.7600 & {\bf 0.7695} & {\bf 0.9465} & 0.9463 & {\bf 0.7689} & 0.7685 & {\bf 0.7694} & 0.7689\\
ijcnn & 0.9103 &{\bf 0.9238} & 0.8288 & {\bf 0.8363} & 0.9699 & {\bf 0.9712} & 0.8271 & {\bf 0.8339} & 0.8280 & {\bf 0.8351}\\
skin & {\bf 0.9437} & 0.9259 & {\bf 0.9650} & 0.9504 & {\bf 0.9976} & 0.9954 & {\bf 0.9633} & 0.9480 & {\bf 0.9645} & 0.9492\\
sensorless & 0.9305 &{\bf 0.9356} & {\bf 0.8969} & 0.8628 &{\bf 0.9753} & 0.9687 &{\bf 0.8771} & 0.8573 &{\bf 0.8841} & 0.8594\\
\bottomrule
\end{tabular}
}
}

\subtable[Reported values of the G-TP/PR, JAC, F1, and F2 metrics obtained with the MPMND and Plug-In methods.]{
\resizebox{\textwidth}{!}{
\begin{tabular}{l|ll|ll|ll|ll|}
\toprule
\multirow{2}{*}{Dataset}  & \multicolumn{2}{c|}{G-TP/PR} & \multicolumn{2}{c|}{JAC} & \multicolumn{2}{c|}{F1} & \multicolumn{2}{c|}{F2} \\  \cline{2-9}
 & MPMND & PLUG-IN & MPMND & PLUG-IN & MPMND & PLUG-IN & MPMND & PLUG-IN \\
\midrule
letter     & 0.5487 & { \bf 0.6230} & 0.3925 & { \bf 0.4680} & 0.5361 & { \bf 0.6122} & 0.6473 & { \bf 0.6628} \\
breast     & { \bf 0.9678} & 0.9471 & { \bf 0.9374} & 0.8994 & { \bf 0.9676} & 0.9467 & { \bf 0.9814} & 0.9679 \\
segment    & 0.8668 & { \bf 0.8898} & 0.7837 & { \bf 0.8240} & 0.8516 & { \bf 0.8899} & 0.9074 & { \bf 0.9162} \\
usps       & { \bf 0.8592} & 0.8403 & { \bf 0.7502} & 0.7292 & { \bf 0.8533} & 0.8396 & { \bf 0.8795} & 0.8565 \\
covtype    & 0.7026 & { \bf 0.7201} & 0.5468 & { \bf 0.5505} & { \bf 0.7105} & 0.7091 & { \bf 0.8161} & 0.8156 \\
ijcnn      & 0.5365 & { \bf 0.5967} & 0.3816 & { \bf 0.4252} & 0.5524 & { \bf 0.5967} & 0.6067 & { \bf 0.6735} \\
skin       & { \bf 0.8903} & 0.8544 & { \bf 0.7920} & 0.7305 & { \bf 0.8839} & 0.8442 & { \bf 0.9502} & 0.9303 \\
sensorless & { \bf 0.6575} & 0.6269 & { \bf 0.4778} & 0.4687 & { \bf 0.6009} & 0.5860 & { \bf 0.7490} & 0.7065 \\
\bottomrule
\end{tabular}
}
}
\end{table*}

\begin{figure*}[htbp]
\centering
\subfigure[letter]{\includegraphics[width=4cm]{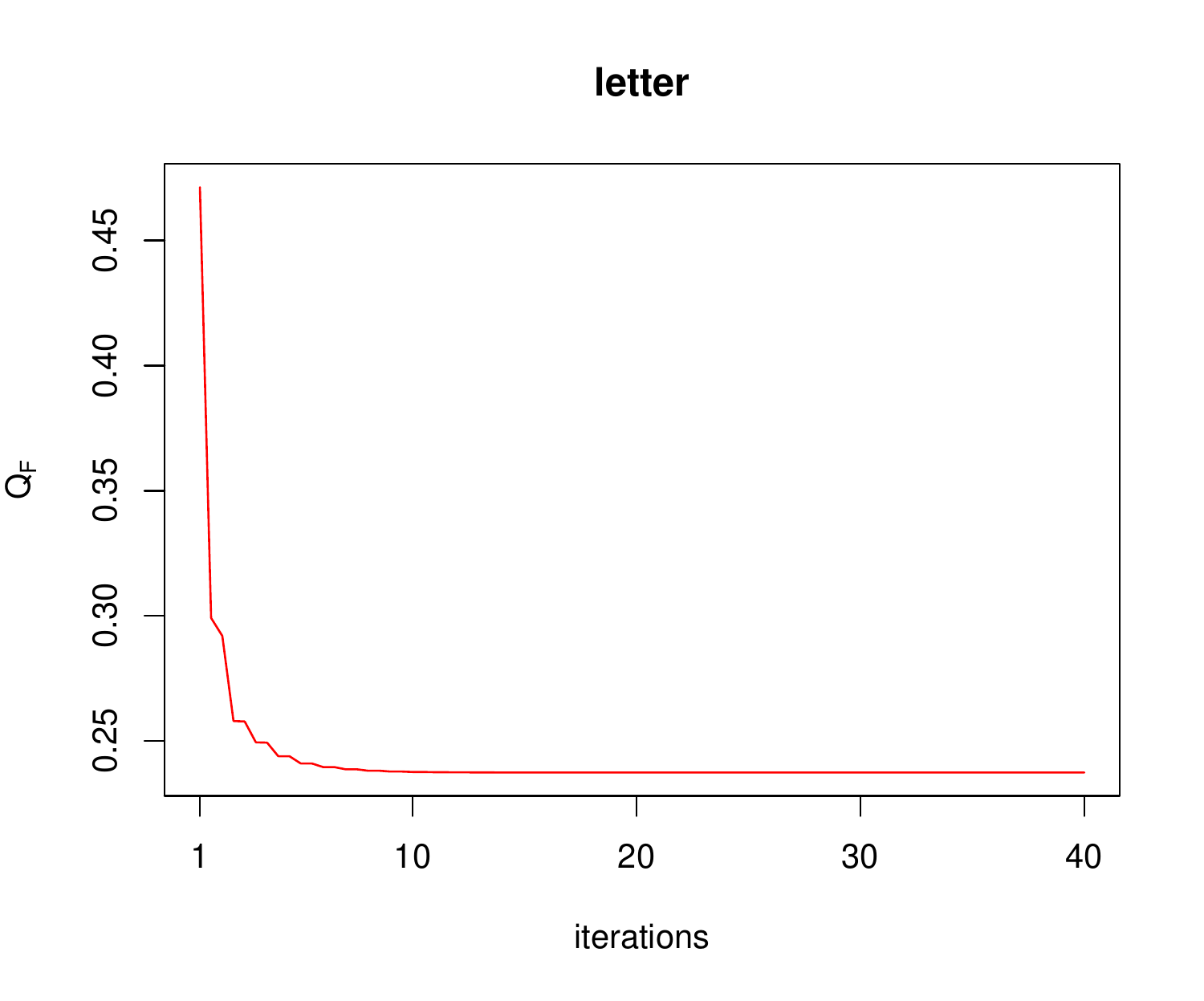}
}
\quad
\subfigure[breast]{\includegraphics[width=4cm]{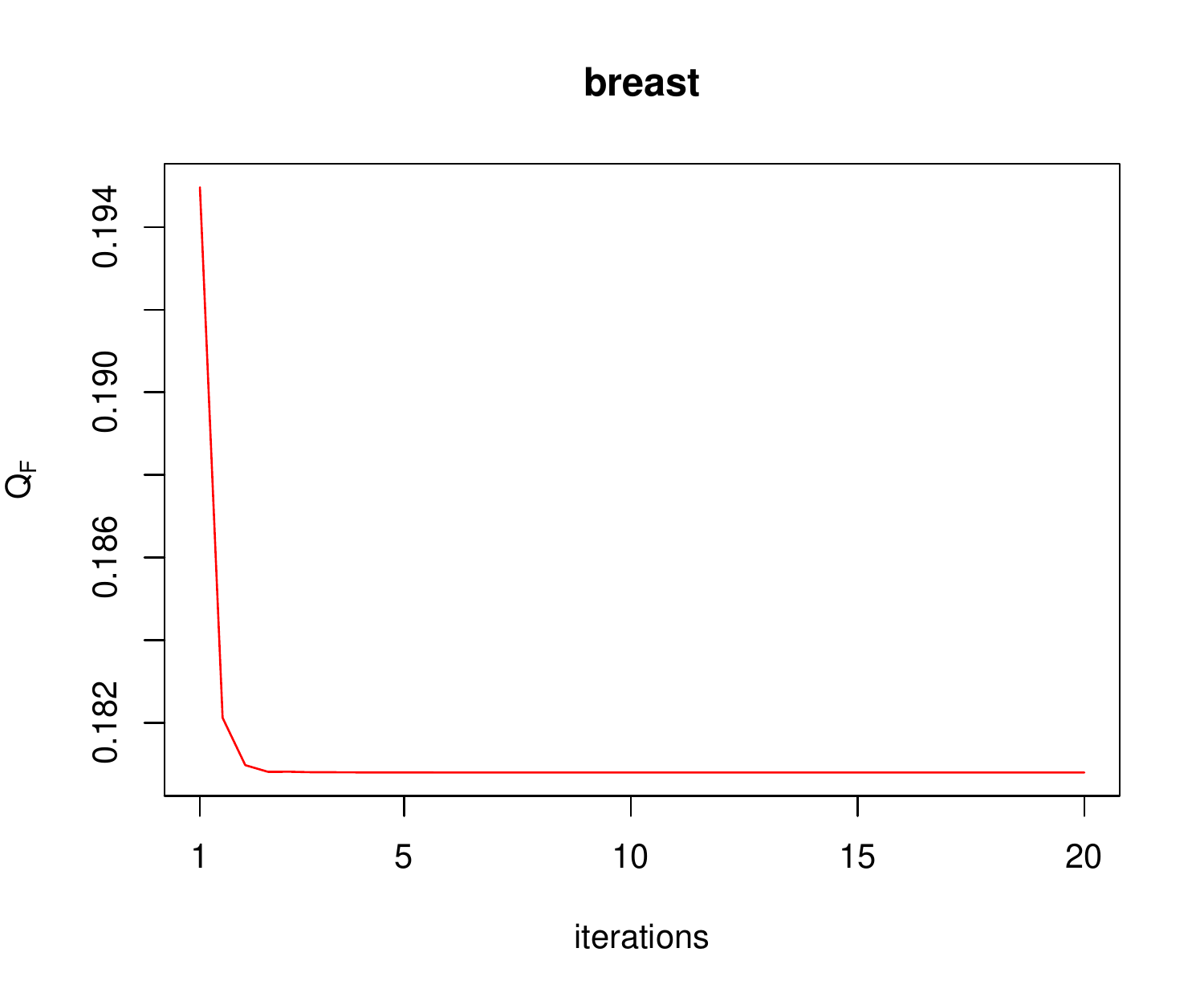}
}
\quad
\subfigure[segment]{\includegraphics[width=4cm]{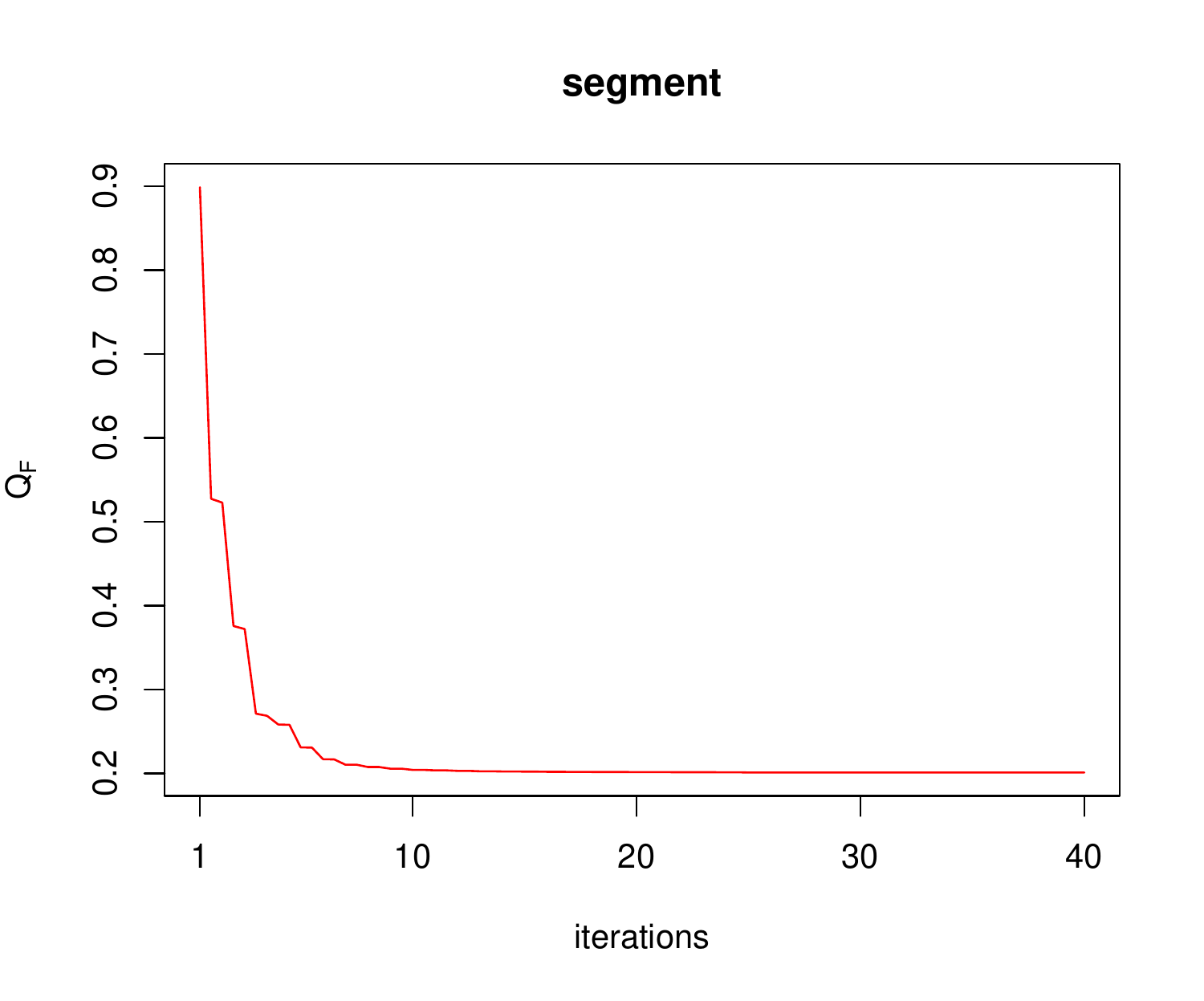}
}
\quad
\subfigure[usps]{\includegraphics[width=4cm]{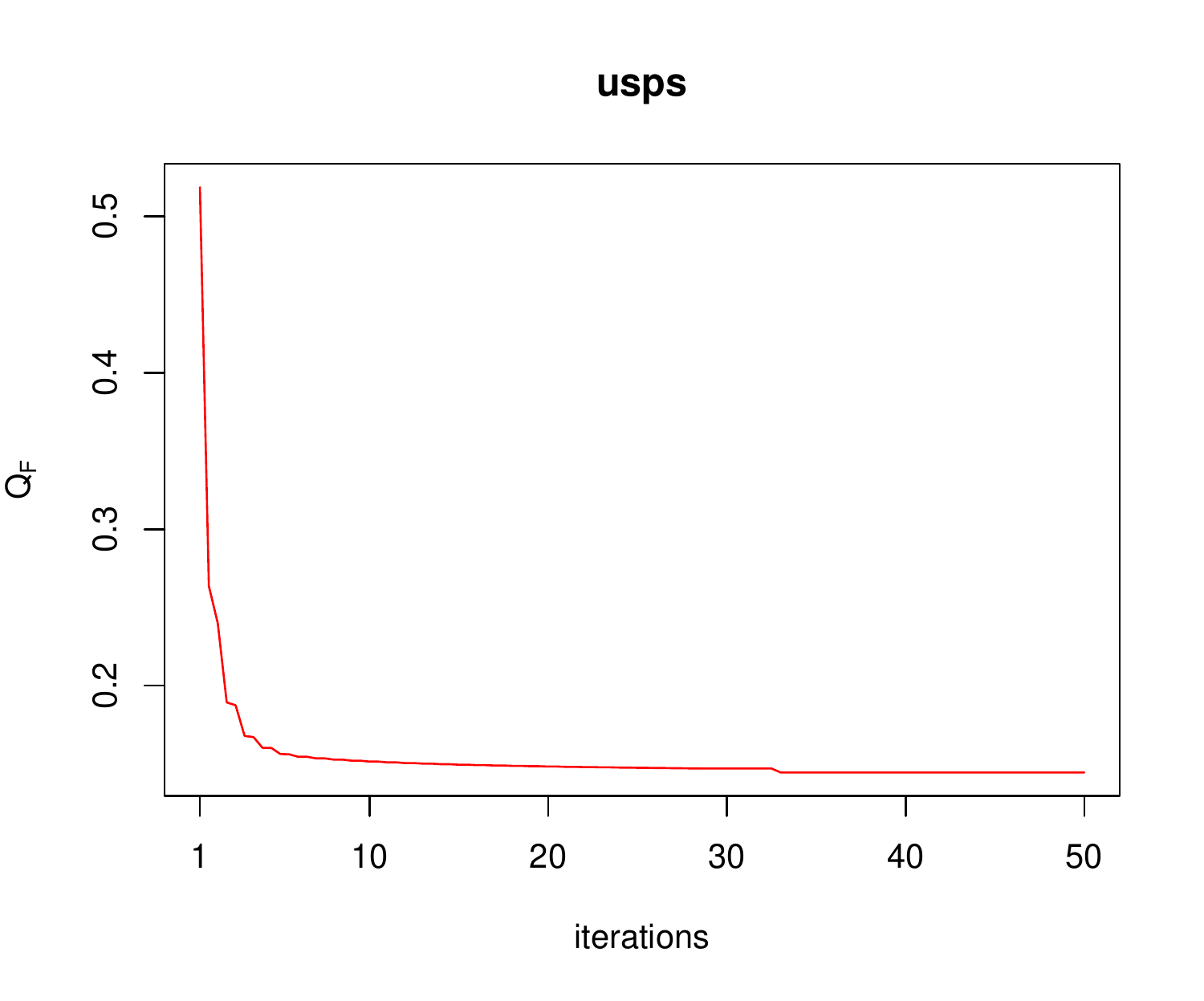}
}
\quad
\subfigure[covtype]{\includegraphics[width=4cm]{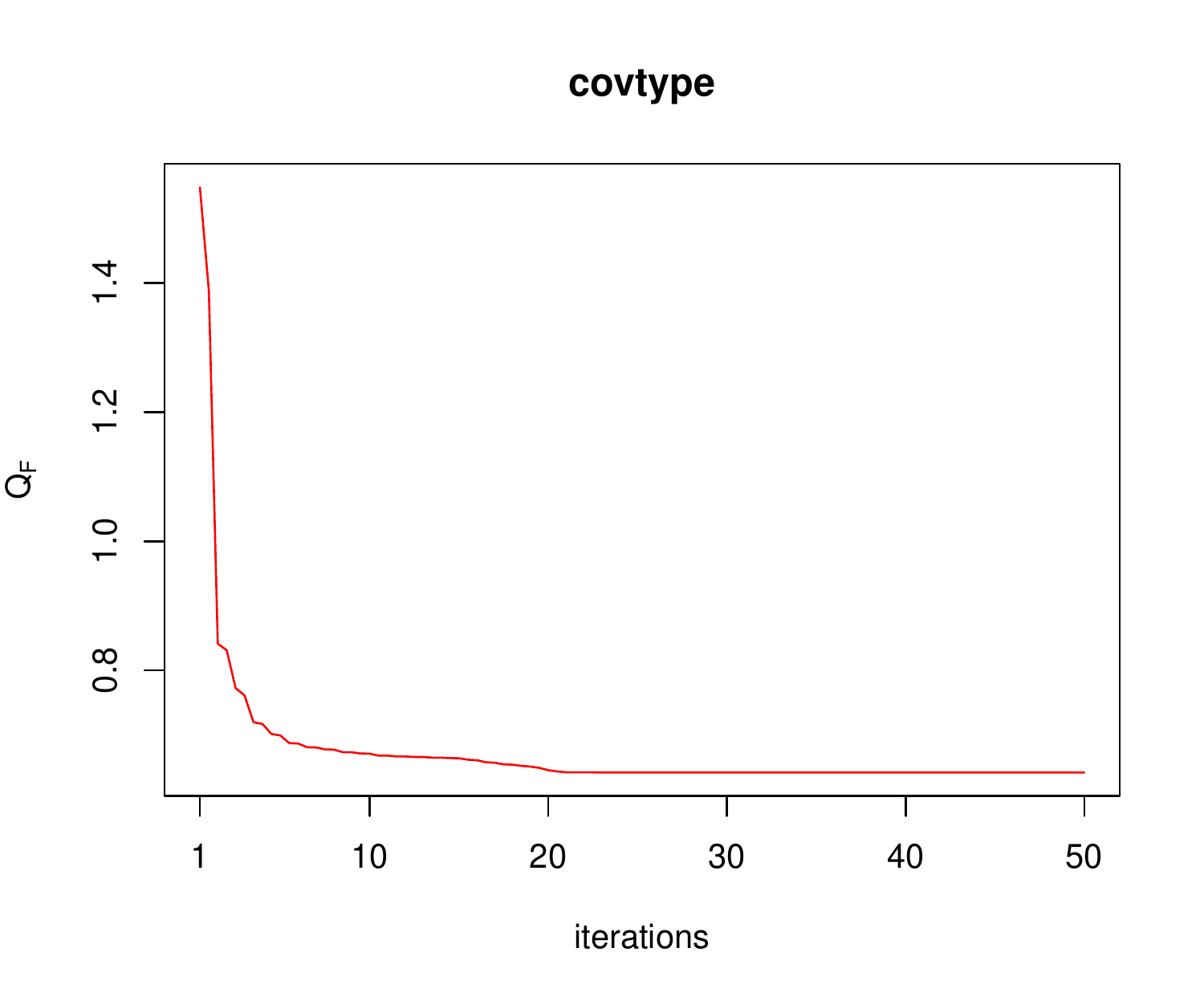}
}
\quad
\subfigure[ijcnn]{\includegraphics[width=4cm]{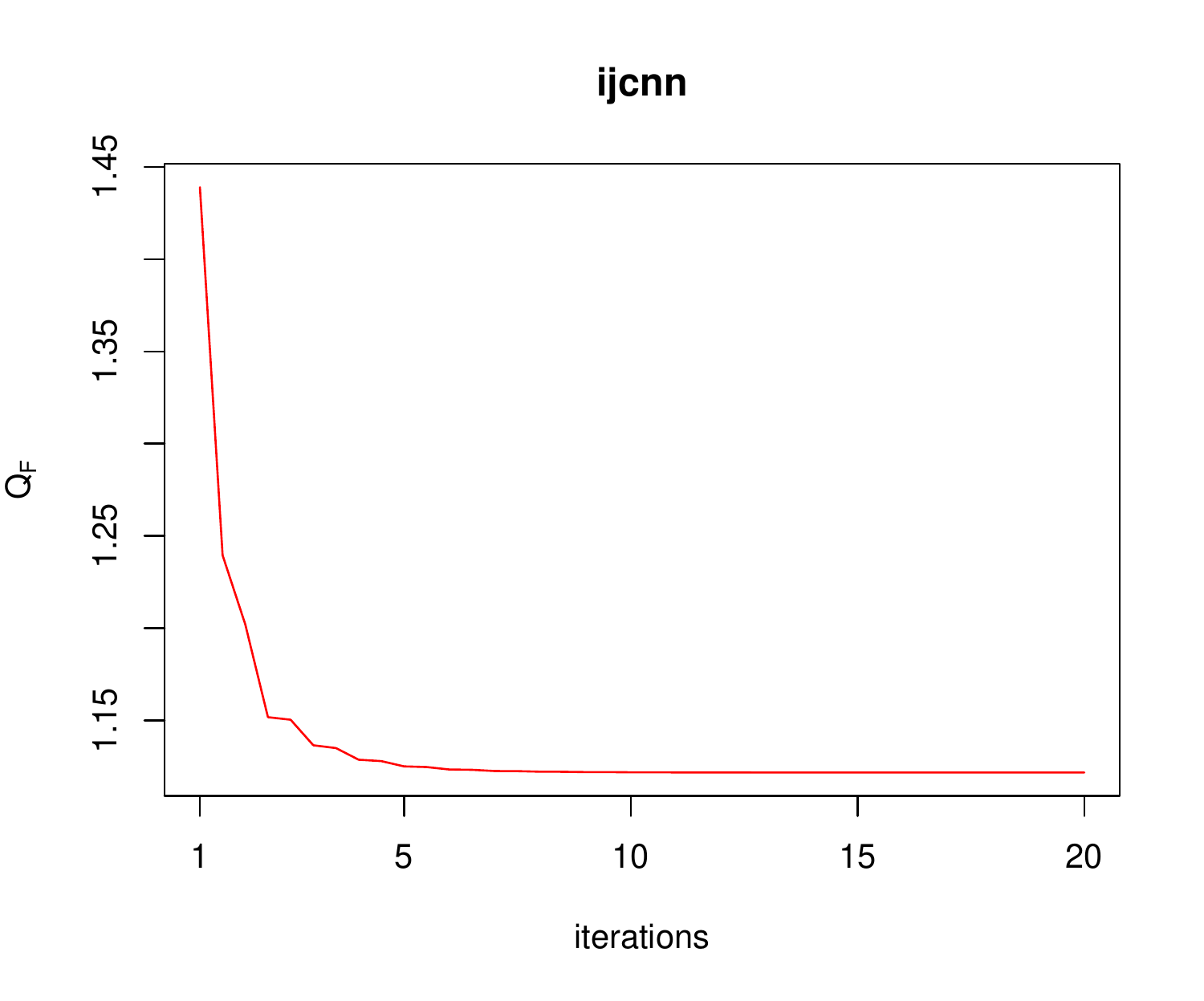}
}
\quad
\subfigure[skin]{\includegraphics[width=4cm]{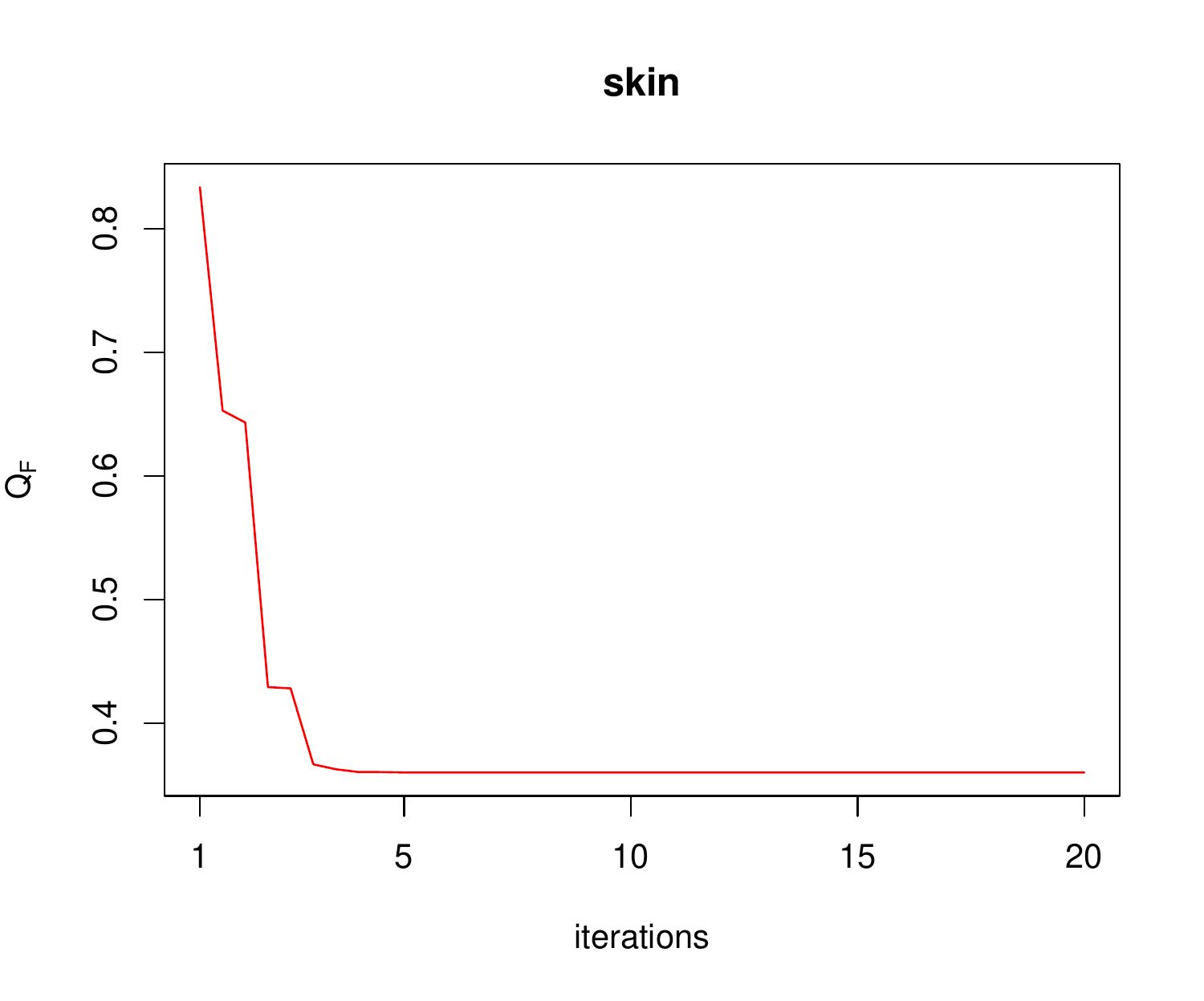}
}
\quad
\subfigure[sensorless]{\includegraphics[width=4cm]{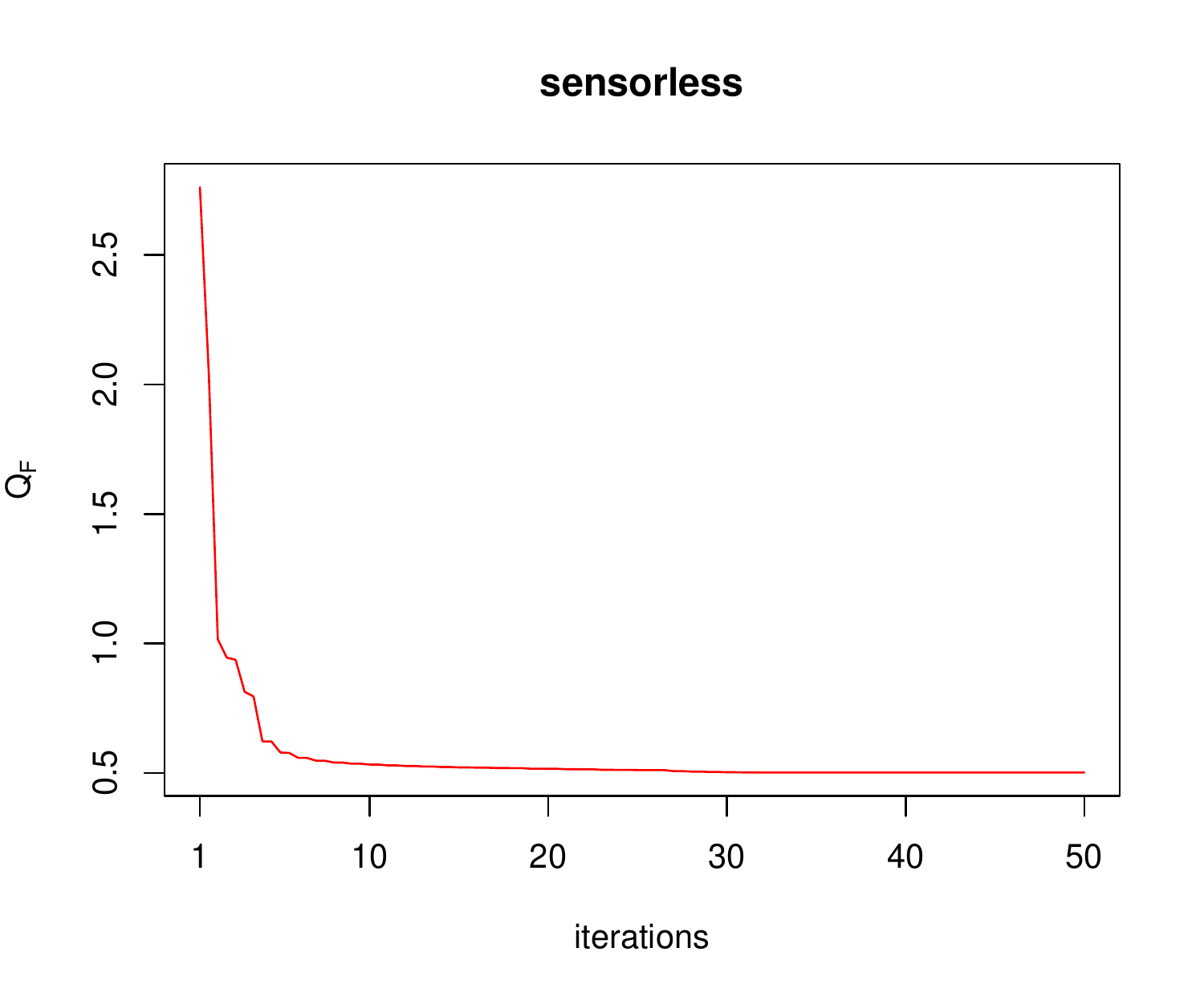}
}
\caption{The reported $Q_F$ as a function of iterations. As the number of iterations increases, 
$Q_F$ monotonically decreases and converges.
}\label{fig:Q}
\end{figure*}

\begin{figure*}[htbp]
\centering
\subfigure[letter]{\includegraphics[width=4cm]{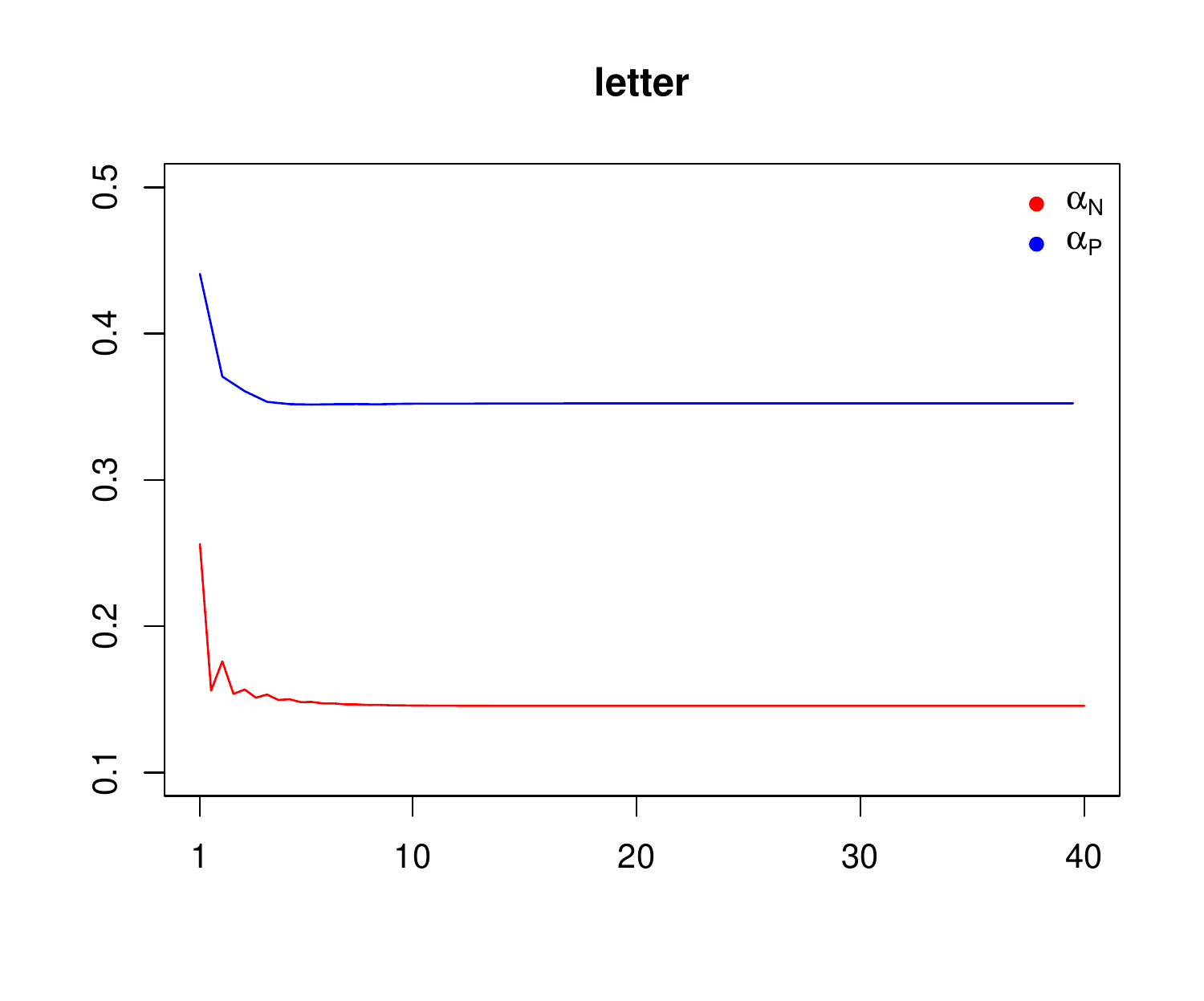}
}
\quad
\subfigure[breast]{\includegraphics[width=4cm]{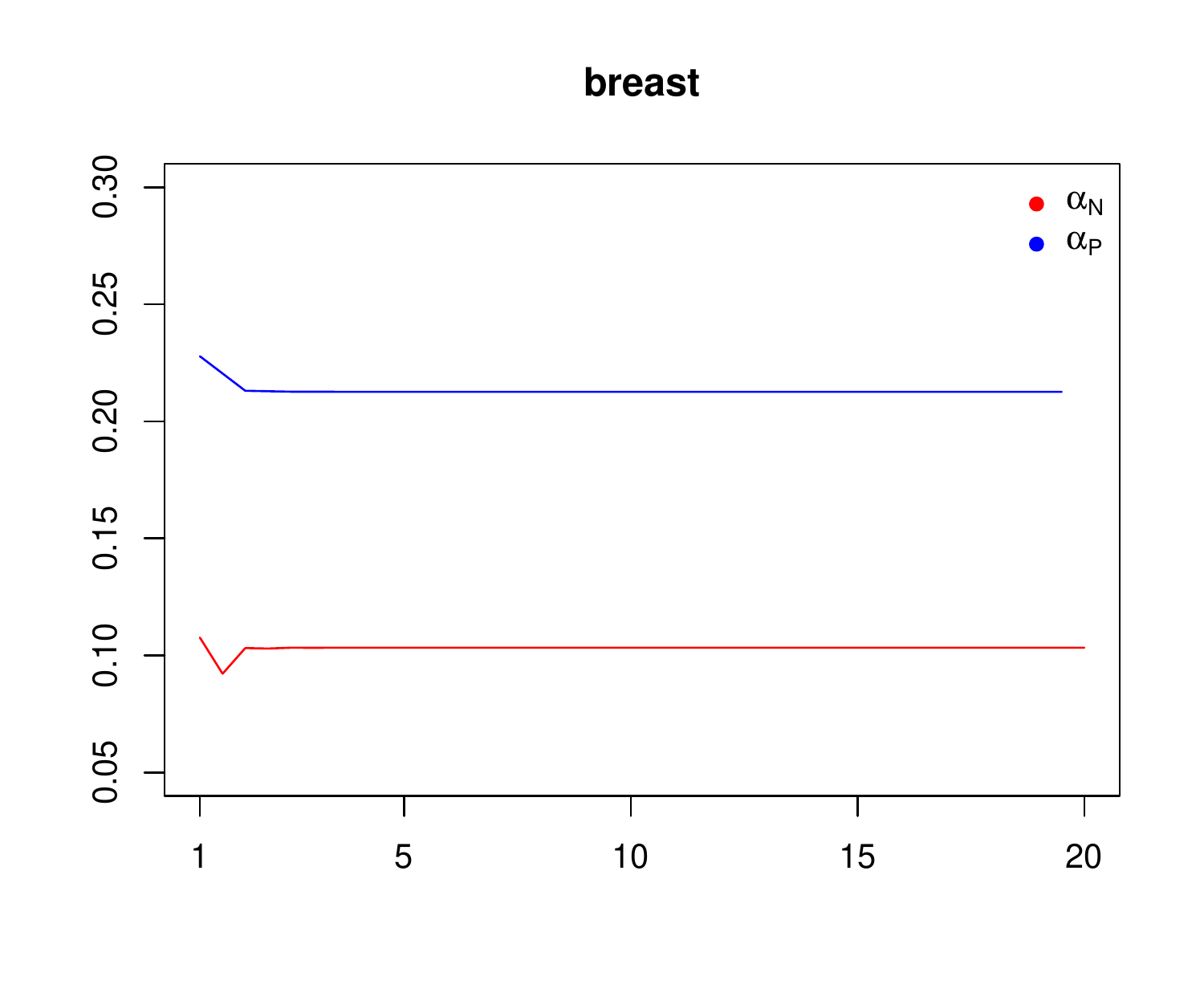}
}
\quad
\subfigure[segment]{\includegraphics[width=4cm]{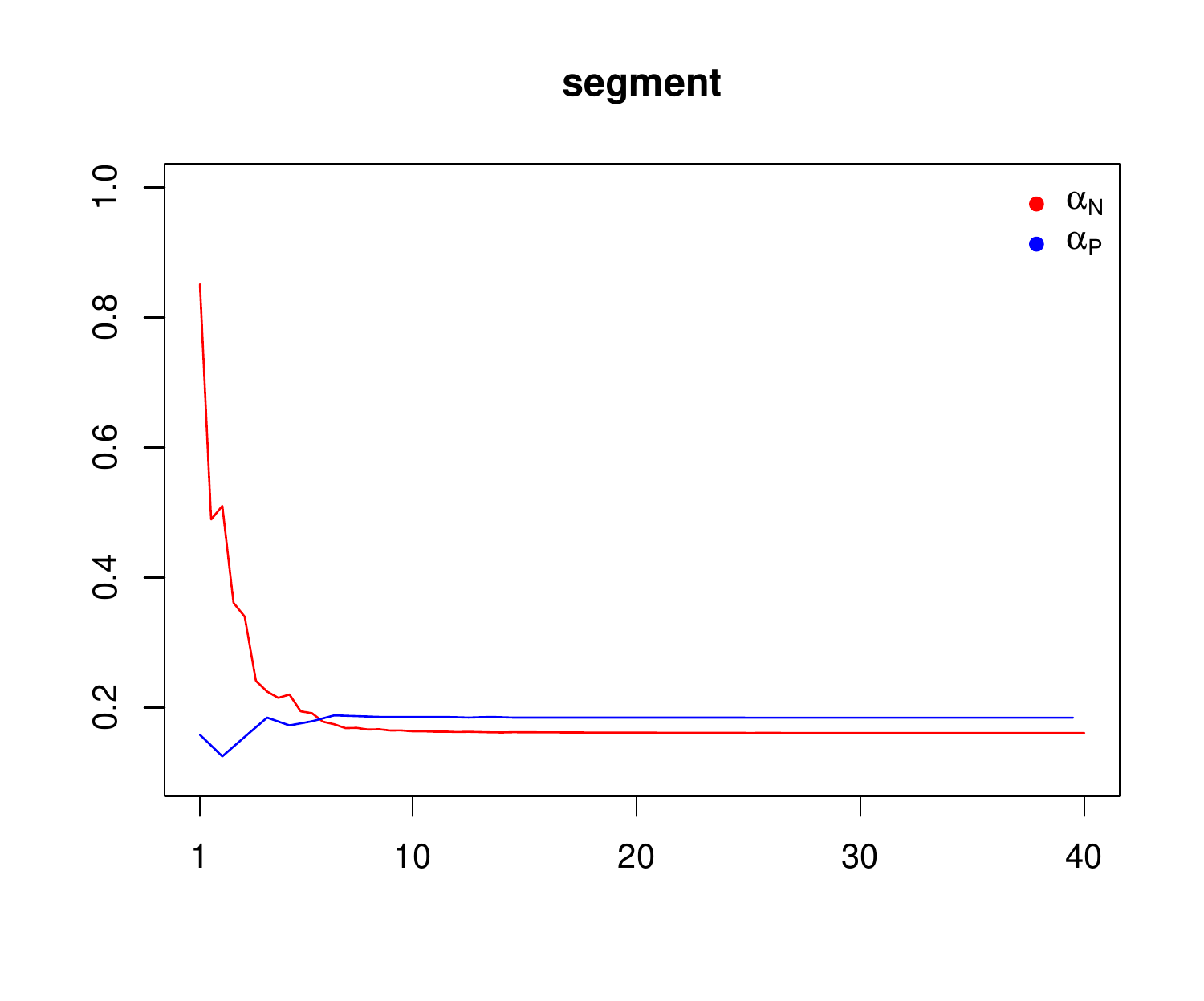}
}
\quad
\subfigure[usps]{\includegraphics[width=4cm]{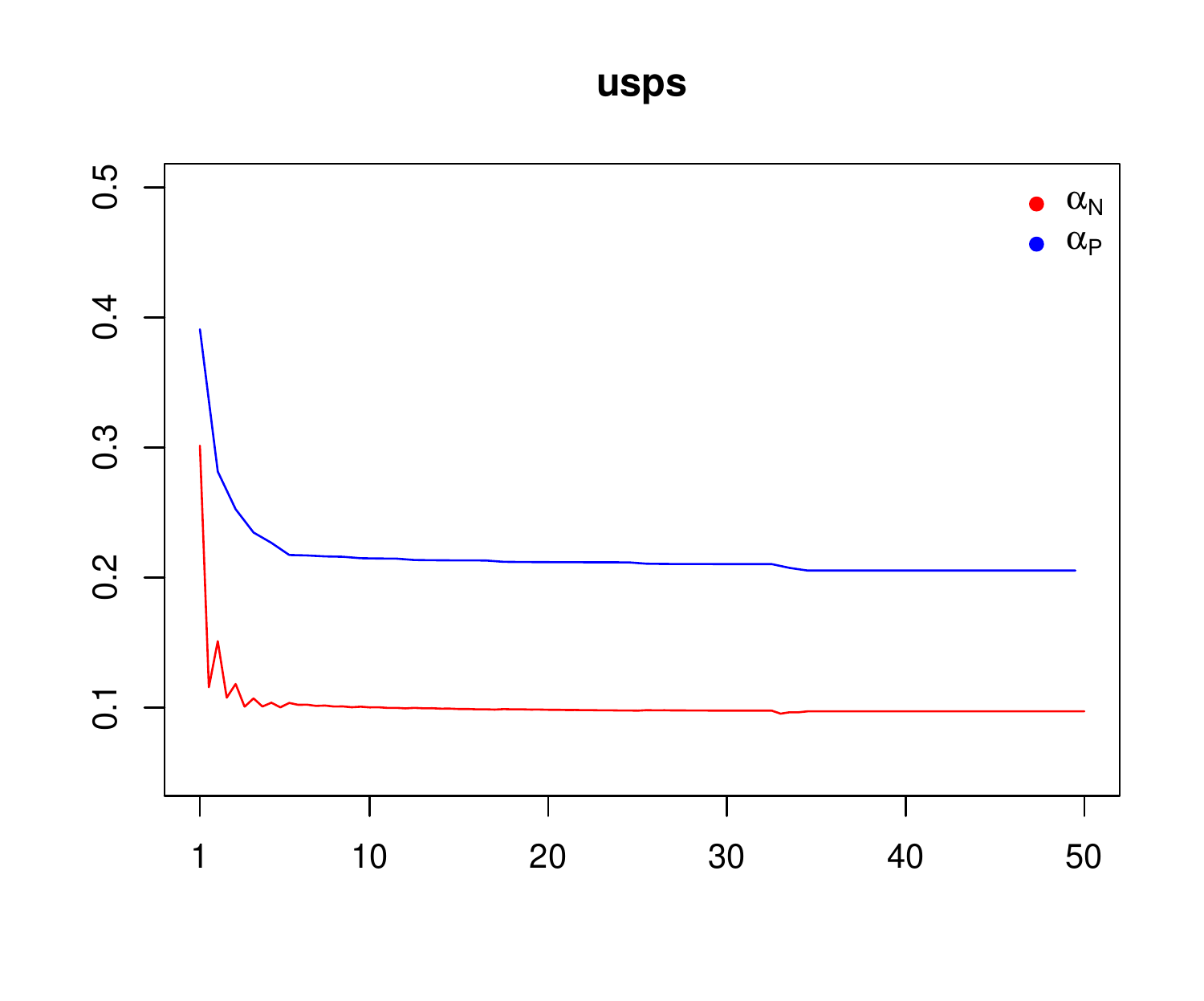}
}
\quad
\subfigure[covtype]{\includegraphics[width=4cm]{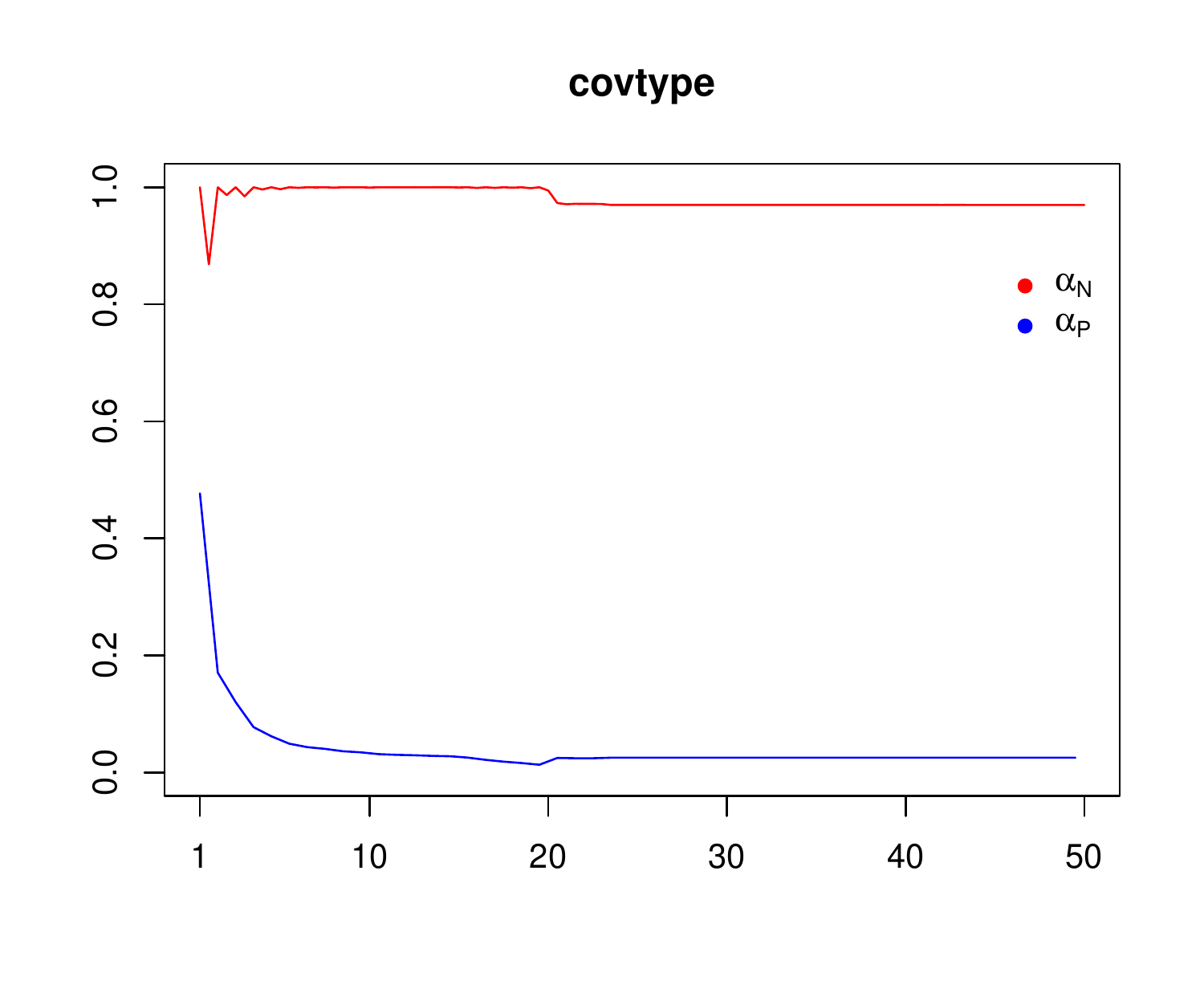}
}
\quad
\subfigure[ijcnn]{\includegraphics[width=4cm]{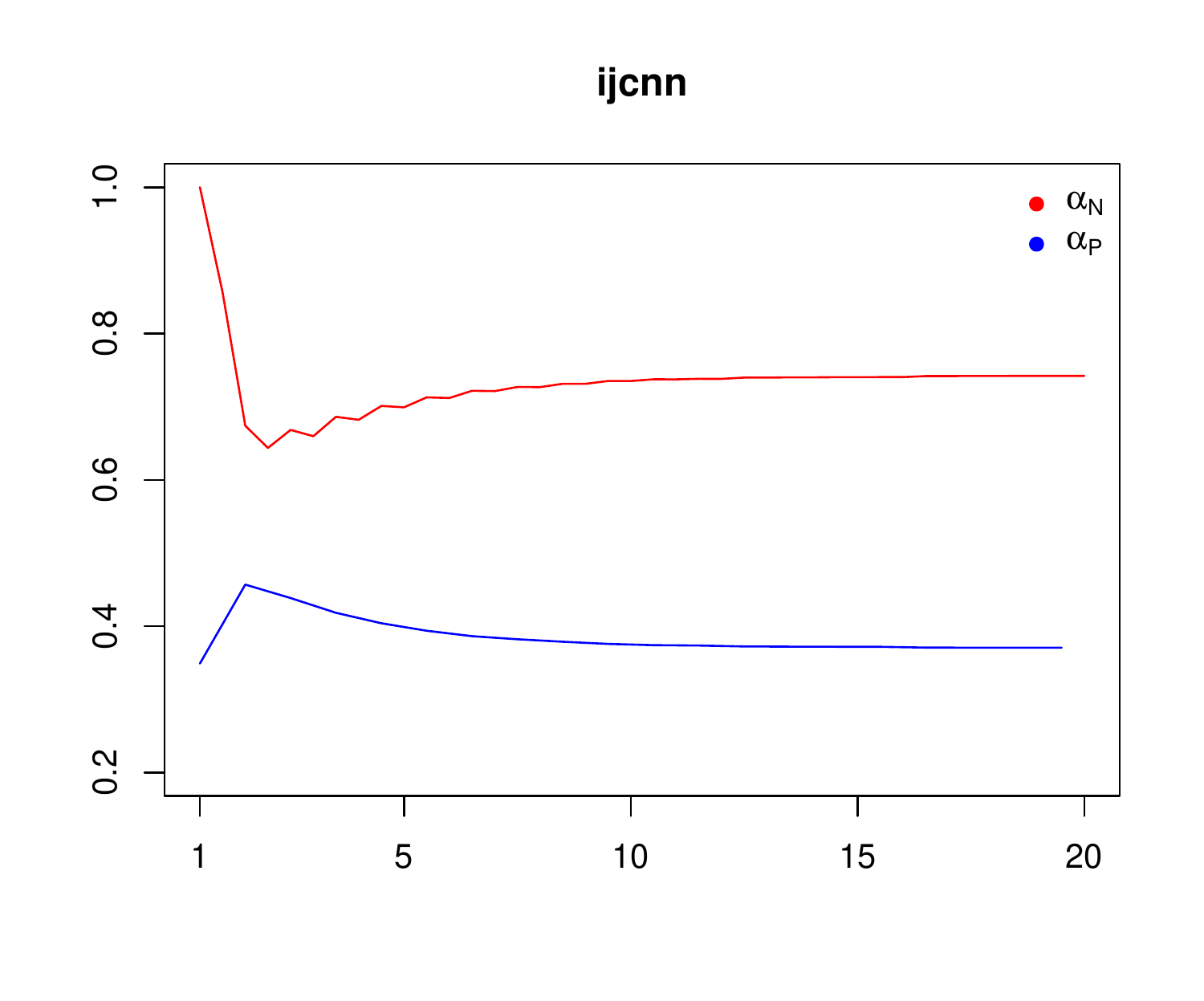}
}
\quad
\subfigure[skin]{\includegraphics[width=4cm]{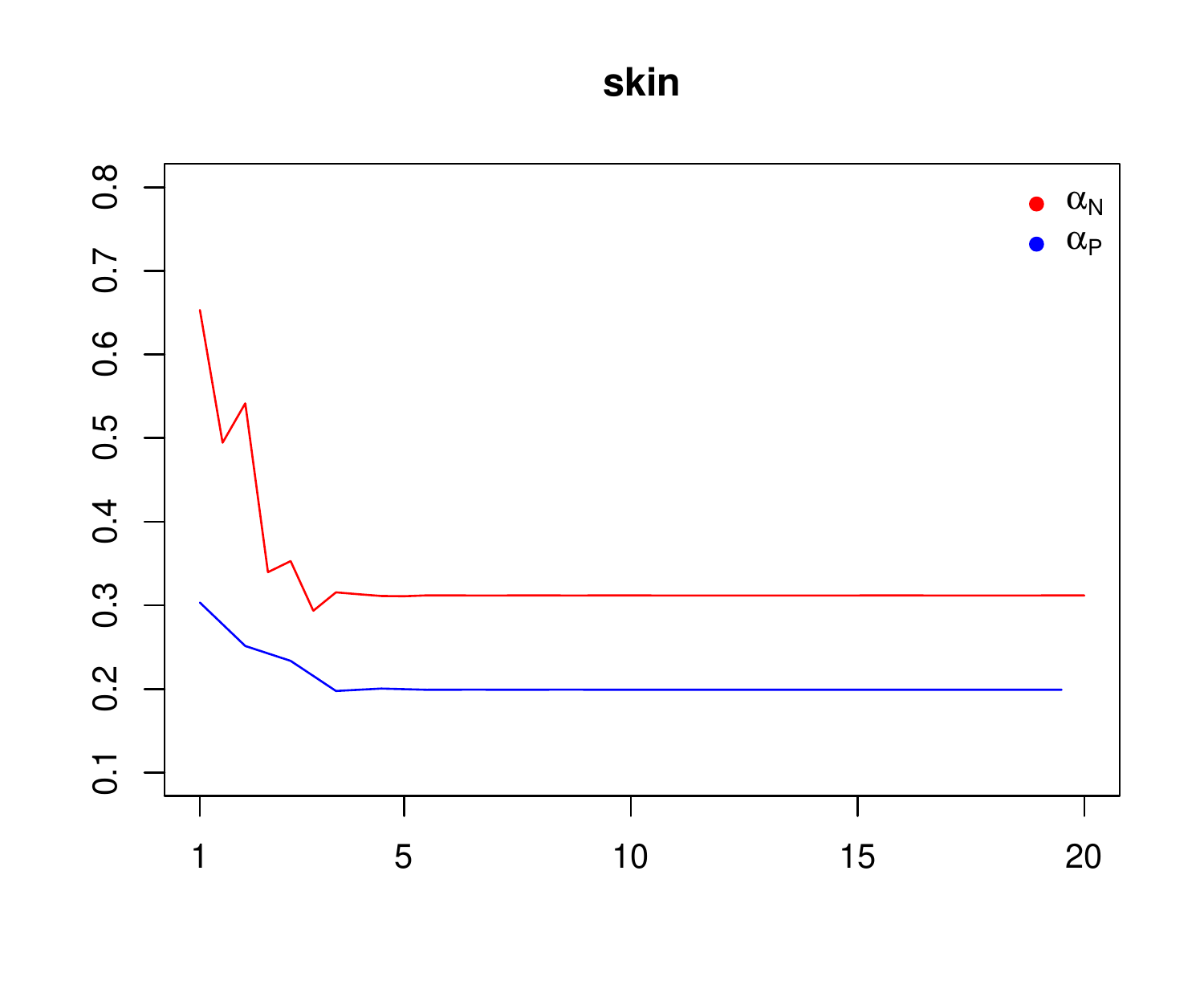}
}
\quad
\subfigure[sensorless]{\includegraphics[width=4cm]{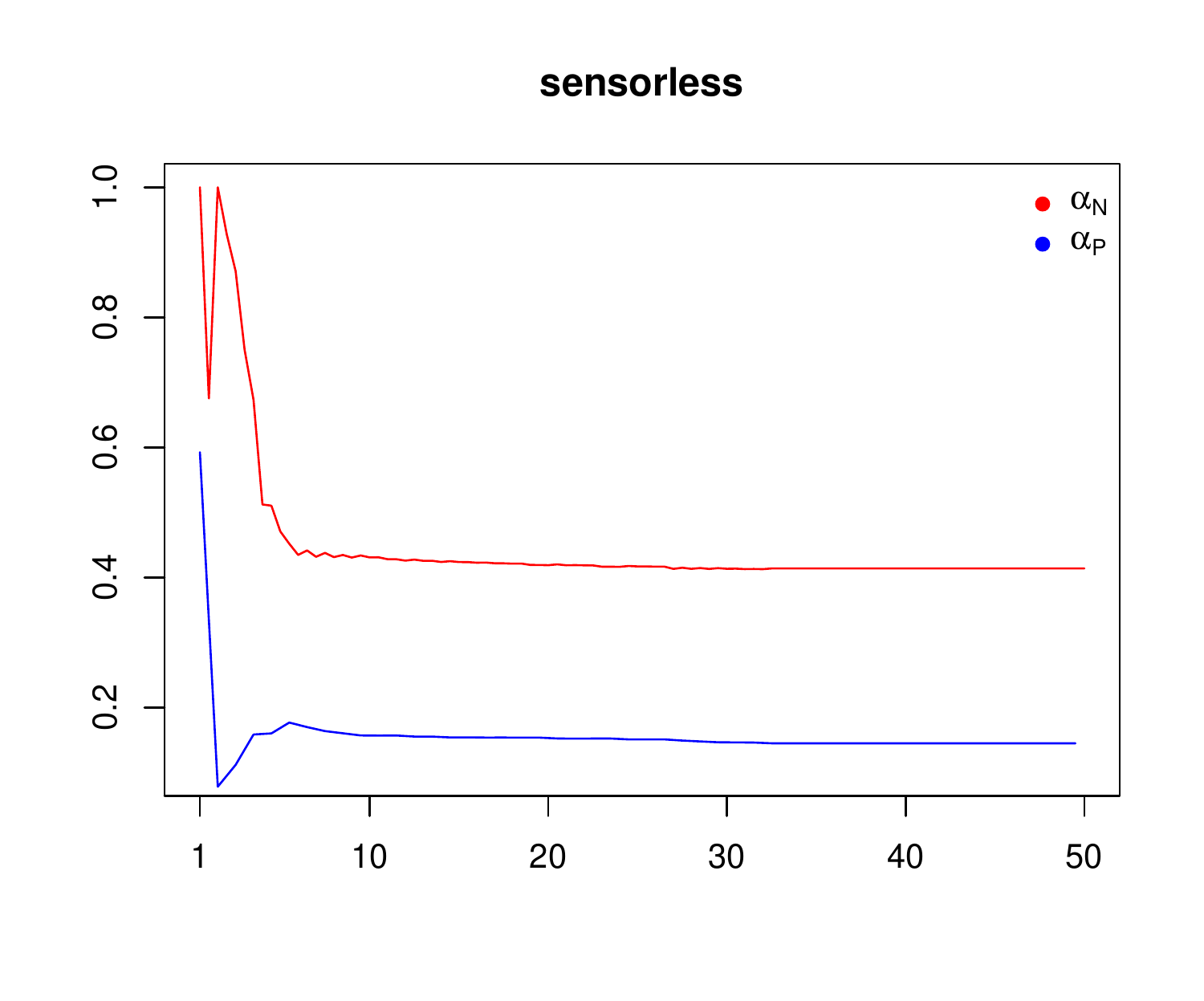}
}
\caption{The value of $\alpha_P$ (red line) and $\alpha_N$ (blue line) obtained as a function of iterations.
Both of them are convergent for each dataset.
}\label{fig:alpha}
\end{figure*}

\begin{table*}
\caption{The running time (in seconds) with different algorithms for imbalanced classification problems.}
\label{tab:running time-2}
\begin{center}
\begin{small}
\begin{sc}
\resizebox{\textwidth}{!}{
\begin{tabular}{llllllllllll}
\toprule
Dataset    & MPMF   & Plug-in & SVM   & ROS    & SMOTE  & ADASYN & RUS    & CC      & BRF    & RUSBoost & KMPMF \\
\midrule
letter     & 0.1878 & 0.1419 & 0.1346 & 0.9458 & 0.9115 & 1.1413 & 0.0151 & 58.796  & 1.0127 & 2.0387 & 6.0161 \\
breast     & 0.0550 & 0.0021 & 0.0018 & 0.0039 & 0.0047 & 0.0079 & 0.0024 & 2.9232  & 0.3917 & 0.7886 & 1.2615 \\
segment    & 0.2984 & 0.0280 & 0.0152 & 0.0739 & 0.0710 & 0.0985 & 0.0074 & 6.0516  & 0.6517 & 1.0120 & 5.6141 \\
usps       & 1.4405 & 24.088 & 0.7531 & 2.5059 & 2.3554 & 5.9611 & 0.1479 & 124.67  & 1.9727 & 10.271 & 18.138 \\
covtype    & 0.3504 & 9.4688 & 33.317 & 41.966 & 1201.6 & 2820.2 & 21.513 & -       & 163.56 & 290.15 & 457.27 \\
ijcnn      & 0.1082 & 0.3404 & 0.9117 & 2.5844 & 2.9956 & 8.4936 & 0.1421 & 1923.4  & 5.1220 & 10.891 & 28.282 \\
skin       & 0.2455 & 0.6936 & 3.8266 & 7.8850 & 8.1759 & 8.9347 & 1.7048 & -       & 17.873 & 35.793 & 22.468 \\
sensorless & 0.5073 & 7.9259 & 4.6929 & 10.856 & 12.336 & 14.781 & 0.6054 & 1021.8  & 4.0356 & 12.928 & 36.127 \\
\bottomrule
\end{tabular}
}
\end{sc}
\end{small}
\end{center}
\end{table*}

\begin{table*}
\caption{The value of the F1-measure obtained with different algorithms for imbalanced classification problems.
}\label{tab:measure-2}
\begin{center}
\begin{small}
\begin{sc}
\resizebox{\textwidth}{!}{
\begin{tabular}{llllllllllll}
\toprule
Dataset    & MPMF   & Plug-in & SVM   & ROS    & SMOTE  & ADASYN & RUS    & CC     & BRF & RUSBoost & KMPMF \\
\midrule
letter     & 0.5361 & 0.6122 & 0.4539 & 0.4188 & 0.4252 & 0.3338 & 0.4083 & 0.4035 &  0.7548 & 0.6021 & 0.8934 \\
breast     & 0.9676 & 0.9467 & 0.9477 & 0.9514 & 0.9480 & 0.9524 & 0.9540 & 0.9477 &  0.9620 & 0.9319 & 0.9772 \\
segment    & 0.8516 & 0.8899 & 0.7995 & 0.7870 & 0.7958 & 0.7640 & 0.7579 & 0.7566 &  0.9176 & 0.8982 & 0.9105 \\
usps       & 0.8533 & 0.8396 & 0.8745 & 0.8442 & 0.8547 & 0.8305 & 0.7887 & 0.7663 &  0.8753 & 0.7813 & 0.9011 \\
covtype    & 0.7105 & 0.7091 & 0.6783 & 0.7107 & 0.7120 & 0.7016 & 0.7106 & -      &  0.9506 & 0.7181 & 0.6718 \\
ijcnn      & 0.5524 & 0.5967 & 0.4178 & 0.5136 & 0.5228 & 0.5048 & 0.5212 & 0.2913 &  0.7693 & 0.6411 & 0.6269 \\
skin       & 0.8839 & 0.8442 & 0.8213 & 0.8634 & 0.8642 & 0.8344 & 0.8629 & -      &  0.9979 & 0.9167 & 0.9938 \\
sensorless & 0.6009 & 0.5860 & 0.2150 & 0.5351 & 0.5351 & 0.4641 & 0.5154 & 0.4857 &  0.9842 & 0.9681 & 0.9198 \\
\bottomrule
\end{tabular}
}
\end{sc}
\end{small}
\end{center}
\end{table*}

\section{Conclusion}\label{sec7}

In many real-world problems, only the mean and covariance matrix but not the true distribution of data are known 
in advance. To address this issue, \cite{Lanckriet2002} proposed the minimax probability machine (MPM) based on 
the mean and covariance matrix of data and the accuracy rate. However, the accuracy rate is not an appropriate 
metric for imbalanced classification problems. In this paper, we extended MPM to deal with imbalance classification 
problems based on some non-decomposable performance measures including the $F_\beta$ measure 
for the first time. To solve the new model effectively, we derived its equivalent minimization formulation in terms 
of a polynomial combination of FPR and FNR, which is then solved by using the alternating descent method.
The kernel trick is also used to obtain a nonlinear $F_\beta$ maximization.
The advantage of our method is that it makes use of only the mean and covariance matrix of data and thus 
is independent of the training data size, so our method is also appropriate for large-scale problems.
In addition, our method has no hyper-parameters to choose.
Experiments on both the synthetic dataset and real-world benchmark datasets have been presented to
illustrate the effectiveness of our method.

It is noted that certain online learning algorithms have also been studied recently for
the $F_\beta$-measure \cite{Narasimhan2015,Busa-Fekete2015,Liu2018} to handle large-scale problems.
As an ongoing project, we are currently developing certain online learning algorithms based on non-decomposable
performance measures including the $F_\beta$-measure which can deal with imbalanced large-scale problems.
In addition, deep networks have recently been applied to non-decomposable measure maximization problems \cite{Sanyal2018}.
Note that our method is not very efficient in dealing with very high-dimensional imbalanced classification problems 
due to the large storage requirement to store the estimator of the true covariance matrix of data. 
One way to overcome this difficulty is to use a dimension reduction method to reduce the feature dimensions of data.
We hope to study these topics in the near future.

\bibliography{mpmf}
\end{document}